\def\eqref#1{equation~\ref{#1}}
\def\1{\bm{1}}
\DeclareMathAlphabet{\mathsfit}{\encodingdefault}{\sfdefault}{m}{sl}
\SetMathAlphabet{\mathsfit}{bold}{\encodingdefault}{\sfdefault}{bx}{n}
\newcommand{\R}{\mathbb{R}}
\newcommand{\old}[1]{{}}
\newcommand{\later}[1]{{}}
\newtheorem{theorem}{Theorem}
\newtheorem{lemma}{Lemma}
\newtheorem{remark}{Remark}
\newtheorem{claim}{Claim}
\newtheorem{corollary}{Corollary}
\newtheorem{observation}{Observation}
\def\H{{\mathcal H}}
\def\A{{\mathcal N}}
\def\N{{\mathcal N}}
\def\I{{\mathcal I}}
\newcommand{\IR}{\mathbb{R}}
\newcommand{\IZ}{\mathbb{Z}}
\newcommand{\alg}{\textsf{ALG}}
\newcommand{\OO}{\textsf{OPT}}
\newcommand{\AF}{\textsf{\sc Algo-Fat}}
\newcommand{\AI}{\textsf{\sc Algo-Interval}}
\definecolor{Darkblue}{rgb}{0,0,.8}
\definecolor{Brown}{cmyk}{0,0.61,1.,0.60}
\definecolor{Purple}{cmyk}{0.45,0.86,0,0}
\definecolor{Darkgreen}{rgb}{0.133,0.700,0.133}
\definecolor{MyGreen}{rgb}{0.200,0.500,0.200}
\renewcommand{\emph}[1]{{\color{MyGreen}{\em #1}}}
\title{Online Epsilon Net and Piercing Set for Geometric Concepts}
\author{Sujoy Bhore\thanks{Department of Computer Science \& Engineering, Indian Institute of Technology Bombay, Mumbai, India. Email: \texttt{sujoy.bhore@gmail.com}.}
\and
Devdan Dey\thanks{Department of Computer Science \& Engineering, Indian Institute of Technology Bombay, Mumbai, India. Email: \texttt{mnz.devdan@gmail.com}.}
\and
Satyam Singh\thanks{Department of Computer Science \& Engineering, Indian Institute of Technology Bombay, Mumbai, India. Email: \texttt{satyamiitd19@gmail.com}.}}
\date{}
\begin{document}

\maketitle
\begin{abstract}
    \emph{VC-dimension} (Vapnik and Chervonenkis, 1971) and \emph{$\varepsilon$-nets} (Haussler and Welzl, 1987) are key concepts in Statistical Learning Theory. Intuitively, \emph{VC-dimension} is a measure of the size of a class of sets. The famous \emph{$\varepsilon$-net theorem}, a fundamental result in Discrete Geometry, asserts that if the VC-dimension of a set system is bounded, then a small sample exists that intersects all sufficiently large sets.
    
    In online learning scenarios where data arrives sequentially, the VC-dimension helps to bound the complexity of the set system, and $\varepsilon$-nets ensure the selection of a small representative set. This sampling framework is crucial in various domains, including spatial data analysis, motion planning in dynamic environments, optimization of sensor networks, and feature extraction in computer vision, among others. Motivated by these applications, we study the \emph{online $\varepsilon$-net} problem for geometric concepts with bounded VC-dimension. While the offline version of this problem has been extensively studied, surprisingly, there are no known theoretical results for the online version to date. We present the first deterministic online algorithm with an optimal competitive ratio for intervals in $\mathbb{R}$. Next, we give a randomized online algorithm with a near-optimal competitive ratio for axis-aligned boxes in $\mathbb{R}^d$, for $d\le 3$. Furthermore, we introduce a novel technique to analyze similar-sized objects of constant description complexity in $\mathbb{R}^d$, which may be of independent interest. 
    
    Next, we focus on the continuous version of this problem (called \emph{online piercing set}), where ranges of the set system are geometric concepts in $\mathbb{R}^d$ arriving in an online manner, but the universe is the entire ambient space, and the objective is to choose a small sample that intersects all the ranges. Although \emph{online piercing set} is a very well-studied problem in the literature, to our surprise, very few works have addressed generic geometric concepts without any assumption about the sizes. We advance this field by proposing asymptotically optimal competitive deterministic algorithms for boxes and ellipsoids in $\mathbb{R}^d$, for any $d\in\mathbb{N}$.
\end{abstract}

\section{Introduction}

The concepts of Vapnik–Chervonenkis dimension (VC-dimension) and $\varepsilon$-net theory are fundamental components in statistical learning theory. VC dimension, introduced by Vapnik and Chervonenkis in their seminal work~\cite{doi:10.1137/1116025}, is a tighter measure of the complexity of concept classes. We need some key definitions to discuss the notion of VC-dimension formally.

A set system $(\mathcal{X},\mathcal{R})$ consists of a set $\mathcal{X}$ and a class $\mathcal{R}$ of subsets of $\mathcal{X}$. In learning theory, the set $\mathcal{X}$ is the instance space, and $\mathcal{R}$ is the class of potential hypotheses, where a hypothesis $r$ is a subset of $\mathcal{X}$. A set system $(\mathcal{X},\mathcal{R})$ \emph{shatters} a set $\mathcal{A}$ if each subset of $\mathcal{A}$ can be expressed as 
$\mathcal{A}\cap r$ for some $r$ in $\mathcal{R}$. 
The VC-dimension of $\mathcal{R}$ is the size of the largest set shattered by $\mathcal{R}$. 
Due to Vapnik and Chervonenkis~\cite{doi:10.1137/1116025}, it is known that for any ranges space $(\mathcal X,\mathcal R)$ with VC-dimension bounded by a constant $d$, for any $\varepsilon >0$, a randomly chosen \textit{small} subset of $\mathcal X$ will \textit{hit} every range containing at least $\varepsilon |\mathcal X|$ points from $\mathcal{X}$, with high probability. Haussler and Welzl~\cite{Haussler1987netsAS} showed in their seminal work that the size of the \textit{small} subset, called an \emph{$\varepsilon$-net}, is bounded by $O\left(\frac{d}{\varepsilon} \log \frac{d}{\varepsilon}\right)$, where $d$ is the VC-dimension of the range space. This result is famously known as the \emph{$\varepsilon$-net theorem}, and is a celebrated result in Discrete Geometry. One of the central open questions in the theory of $\varepsilon$-nets is whether the logarithmic factor $\log \frac{1}{\varepsilon}$ in the upper bound on their size is truly necessary. 
Pach and Woeginger~\cite{10.1145/98524.98529} showed for $d\ge 2$, logarithmic factor is necessary, but $d=1$ the net size can be bounded by $\max\left(2, \left\lfloor \frac{1}{\varepsilon} \right\rfloor - 1\right)$. In the last three decades, remarkable progress has been made on the size of $\varepsilon$-net for \emph{geometric set families} by exploiting various intrinsic geometric properties (we briefly discuss these results in Section~\ref{relatedwork}).

In this work, we focus on the $\varepsilon$-net problem in the online setup. In the \emph{online $\varepsilon$-net} problem, the set $\cal X$ is known in advance, but the objects of $\mathcal{R}$ arrive one at a time, without advance knowledge, and we need to maintain a valid net $N\subset \cal X$ for the input objects.
The performance of an \emph{online $\varepsilon$-net} algorithm is measured by 
the competitive ratio, which is (informally)  defined as the maximum ratio between the performance of the algorithm and the offline optimal net (see Section~\ref{notation} for a formal definition).

Besides its underlying deep theoretical nature,  \emph{online $\varepsilon$-nets} have found many applications in modern machine learning, particularly in areas like active learning, adversarial robustness, efficient sampling, etc. 
In active learning, $\varepsilon$-nets help to select 
representative samples from large datasets. This process allows the models to be trained with minimal labeled data while maintaining accuracy (see, e.g., 
\cite{balcan2010true, HannekeY15}). Moreover,
\emph{online $\varepsilon$-nets} play an important role in adversarial robustness by covering potential adversarial regions of the input space, ensuring that models are less susceptible to attacks (see, e.g., \cite{cullina2018pac, mkadry2017towards, montasser2019vc}).
In this work, we primarily focus on the theoretical aspects of \emph{online $\varepsilon$-nets}, which form a crucial component of Statistical Learning Theory, contributing to our understanding of generalization, sample complexity, and robustness in machine learning models (\cite{laan2006cross, vapnik2013nature}).

\paragraph{Continuous Setup: Towards Piercing Set.} Given a set $S$ of $n$ geometric objects in $\mathbb{R}^d$, a subset $P\subset \mathbb{R}^d$ is a \emph{piercing set} of $S$ if every object of $S$ contains at least one point of $P$. The \emph{minimum piercing set} (\textsc{MPS}) problem asks for a piercing set $P$ of the smallest size. The problem has numerous applications in facility location, wireless sensor networks, learning theory, etc.; see~\cite{sharir1996rectilinear, huang2002approximation, ben2000obnoxious, KatzNS03, mustafa2022sampling}. 
The problem may be viewed as a ``continuous''  version of a geometric hitting set (where $P$ is constrained to be a subset of a given discrete point set rather than $\R^d$), and geometric hitting set, in turn, corresponds to geometric set cover in the dual range space. 
Hence, by the standard greedy algorithm for set cover, one can compute an $O(\log n)$-approximation to the minimum piercing set in polynomial time for any family of piercing set with constant description complexity (since it suffices to work with a discrete set of $O(n^d)$ candidate points). For \emph{geometric set families}, a range of sophisticated approximation schemes have been proposed over the years (see Section~\ref{relatedwork} for a brief discussion).

In \emph{online piercing set}, the point set $\IR^d$ is known beforehand, but the set $\cal R$ of geometric objects is not known in advance. Here, the geometric objects arrive one by one. An \emph{online algorithm} must maintain a valid piercing set for all objects arrived so far.
Upon the arrival of a new object $\sigma$, the algorithm must maintain a valid piercing set. Note that an online algorithm may add points to the piercing set but cannot remove points from it, i.e., all the decisions taken by the algorithm are irrevocable. The problem aims to minimize the cardinality of the piercing set. In the \emph{online hitting set}, $\mathcal X\subset \IR^d$. 
Charikar et al.~\cite{CharikarCFM04} initiated the study of the online piercing set problem for unit balls. They proposed an online algorithm having a competitive ratio of~$O(2^dd\log d)$. Moreover, they proved that $\Omega(\log d/\log\log \log d)$ is the (deterministic) lower bound of the competitive ratio for this problem. Dumitrescu et al.~\cite{DumitrescuGT20} improved both the upper and lower bounds of the competitive ratio to $O({1.321}^d)$ and $\Omega(d+1)$, respectively. For unit hypercube, Dumitrescu and T\'{o}th~\cite{DumitrescuT22} proved that the competitive ratio of any deterministic online algorithm for the unit covering problem is at least $2^d$. Then, for integer hypercubes in $\IR^d$, they proposed a randomized online algorithm with a competitive ratio of $O(d^2)$ and a deterministic lower bound of $d+1$. For similar size fat objects in $\mathbb{R}^d$\footnote{A set $\cal S$  is said to be \emph{similarly sized fat objects} when the ratio of the largest width of an object in $\cal S$ to the smallest width of an object in $\cal S$  is bounded by a fixed constant.}, De et al.~\cite{DeJKS24} gave a deterministic algorithm with competitive ratio~$O((\frac{2}{\alpha}+2)^d\log M)$, and a lower bound of $\Omega(d\log M+2^d)$. See Section~\ref{relatedwork} for further discussion on this. 

An \emph{online $\varepsilon$-net} can be viewed as a specific type of \emph{online piercing set} where the focus is on maintaining coverage with respect to the measure of the sets rather than merely ensuring intersection. Consequently, both structures aim to address the complexities of dynamic data scenarios by providing robust sampling and representation mechanisms.

\subsection{Related Work}\label{relatedwork} 

\paragraph{$\varepsilon$-net:} The $\varepsilon$-net theory has seen remarkable growth in the last few decades. Here, we provide a very concise summary of this. 
Matous\v{e}k~\cite{MATOUSEK1992169} demonstrated that for range spaces $(\mathcal X, \mathcal{R})$ where $\mathcal X$ is a finite set of points in $\IR^2$ (or $\IR^3$) and $\mathcal{ R}$ consists of half-spaces, the size of the $\varepsilon$-net can be reduced to $O(\left(\frac{1}{\varepsilon}\right)$ eliminating the logarithmic factor. Aronov et al.~\cite{AronovES09} showed the existence of $\varepsilon$-nets of size $O\left(\frac{1}{\varepsilon} \log \log \frac{1}{\varepsilon}\right)$ for planar point sets and axis-aligned rectangles. Clarkson and Varadarajan~\cite{10.1145/1064092.1064115} made an important breakthrough by establishing a connection between the size of $\varepsilon$-nets for \emph{dual set systems}\footnote{Given a finite family $\cal R$ of ranges in $\IR^d$, the dual range space induced by them is defined as a set system  on the underlying set $\cal R$, consisting of the sets ${\cal R}_x := \{R\ |\ x \in R \in{\cal R}\}$, for all $x\in \IR^d.$} $(\mathcal{X},\mathcal{R})$ associated with geometric objects and their 
\emph{union complexity}\footnote{The complexity of the boundary of the union of a set of objects (see~\cite{ClarksonV07}).}. In particular, they showed if union complexity is $o(n \log n)$, then dual set systems admit $\varepsilon$-net of size $o(1/\varepsilon \log(1/\varepsilon))$. 
On the lower bound side, one can typically find approximately $1/\varepsilon$ pairwise disjoint, $\varepsilon$-heavy ranges in $\mathcal{R}$. For these cases, the size of any $\varepsilon$-net must be at least $\Omega\left(\frac{1}{\varepsilon}\right)$. For many years, it was widely conjectured that for \emph{geometric set families}, this bound is tight (see \cite{10.1145/98524.98530}). 
Alon~\cite{10.5555/3115956.3116028} proved the conjecture false 
by giving examples of geometric range spaces of small VC dimension, e.g., straight lines, rectangles, or infinite strips in the plane, that do not $\varepsilon$-net of size $O(1/\varepsilon)$. Later, Pach and Tardos~\cite{PachT10} showed that range spaces with VC-dimension $2$ have a smallest $\varepsilon$-net of size $\Omega(1/\varepsilon \log 1/\varepsilon)$. They also proved lower bound on size of $\varepsilon$-net for axis-parallel rectangle in $\IR^2$ is $\Omega(1/\varepsilon \log \log 1/ \varepsilon)$.

\paragraph{Piercing Set.} 
In the offline setting, the piercing set problem is a well-studied problem in Computational Geometry. The problem is \textsf{NP}-complete even for a set of unit squares~\cite{GareyJ79}. 
For geometric set families, e.g., unit squares/hypercubes, unit disks/balls, or more generally, near-equal-sized fat objects in $\mathbb{R}^d$, various approximation schemes have been developed; see~\cite{Chan03,EfratKNS00,KatzNS03}).
For arbitrary boxes in $\mathbb{R}^d$, the current best approximation scheme is via 
\emph{$\varepsilon$-net} (see~\cite{AgarwalHRS24}). 
Recently, Bhore and Chan~\cite{ST24} obtained a dramatic improvement over the running time (see also~\cite{BT24}). 

\paragraph{Online Piercing \& Hitting.} Alon et al.~\cite{AlonAABN09} in their seminal work initiated the study of the hitting set problem in the online setting. They proposed an online algorithm having a competitive ratio of $O(\log n \log m)$, where $|{\cal X}|=n$ and $|{{\cal R}}|=m$. Moreover, they establish a nearly matching $\Omega\left({\frac{\log m\log n}{\log\log m+\log \log n}}\right)$ lower bound for the problem. In the geometric setting, Even and Smorodinsky~\cite{EvenS14} proposed online algorithms having an optimal competitive ratio of $\Theta(\log n)$, where $\cal P$ is a finite subset of points and ${\cal R}$ consists of half-planes in $\IR^2$, and also when ${\cal R}$ consists of unit disks. Khan et al.~\cite{KhanLRSW23} obtained an optimal $\Theta(\log M)$-competitive algorithm when $\cal P$ is a finite set of points from $\IZ^2$ and ${\cal R}$ consists of integer squares (whose vertices have integral coordinates) $S\subseteq [0,N)^2$ in $\IR^2$. Recently, De et al.~\cite{DeMS24} also obtained an optimal competitive ratio of $\Theta(\log n)$ when $\cal P$ is a finite set of points from $\IR^2$ and ${\cal R}$ consists of translates of either a disk or a regular $k$-gon. 
For a special case, when the point set is entire $\IZ^d$, \cite{DeS24b} studied the problem for unit balls and hypercubes in $\IR^d$. Alefkhani et al.~\cite{AlefkhaniKM23} considered this variant for $\alpha$-fat objects in $(0, M)^d$, and proposed a deterministic online algorithm with a competitive ratio of $(\frac{4}{\alpha} +1)^{2d} \log M$. Recently, De et al.~\cite{DeMS24} obtained improved upper and lower bounds.

\subsection{Our Contributions.}
We study the \emph{online $\varepsilon$-net} and \emph{online piercing set} for a wide range of geometric objects. For some of the objects, we designed online algorithms that achieve asymptotically tight competitive ratios. We summarize our results below.

\paragraph{Online $\varepsilon$-net.} We present the first deterministic online algorithm for intervals in $\IR$ with an optimal competitive ratio of $\Theta(\log \frac{1}{\varepsilon})$.
This result is tight due to an existing lower bound of $\Omega(\log n)$ for online $\varepsilon$-net for intervals known due to Even and Smorodinsky~\cite{EvenS14}.
Next, for axis-aligned rectangles in $\mathbb{R}^2$ and boxes in $\mathbb{R}^3$, we devise randomized algorithms with near-optimal competitive ratios of $O(\log \frac{1}{\varepsilon})$ and $O(\log^3 \frac{1}{\varepsilon})$, respectively. We make significant progress on classical $\varepsilon$-net  in the online regime, for which no prior upper bounds were known.

\paragraph{Online piercing set.} Starting from the work of Charikar et al.~\cite{CharikarCFM04}, \emph{online piercing set} has been studied extensively over the years (see~\cite{DumitrescuGT20, DumitrescuT22}). However, it is impossible to obtain sublinear competitive ratios for any geometric families due to a hopeless lower bound of $\Omega(n)$, which even holds for arbitrary intervals, where $n$ is the length of the input sequence. 
Several works addressed this issue by making assumptions on the object types (\emph{fatness}) or aspect ratio of the input objects (see~\cite{DeJKS24, KhanLRSW23}). Surprisingly, very little is known when these constraints do not hold. 
We present the first deterministic online algorithm for axis-aligned boxes and ellipsoids in $\mathbb{R}^d$, with an optimal competitive ratio of~$O(\log M)$. These results are asymptotically tight due to the existing lower bound of $\Omega(\log M)$ for hypercubes and balls in $\IR^d$ (\cite{DeMS24}). Additionally, we introduce a novel technique to analyze similar-sized fat objects of constant description complexity in $\IR^d$. Although the result slightly improves the existing upper bound known due to De et al.~\cite{DeJKS24}, we believe the technique may be useful to other online geometric algorithms.

\section{Notation and Preliminaries}\label{notation}

We use $\mathbb{Z}^{+}$ and $\IR^{+}$ to denote the set of positive integers and positive real numbers, respectively. We use $[n]$ to represent the set  $\{1,2,\ldots,n\}$, where $n\in\mathbb{Z}^{+}$.
For any $\beta\in\mathbb{R}$, we use $\beta\mathbb{Z}$ to denote the set $\left\{\beta z \ |\ z\in\mathbb{Z}\right\}$, where $\mathbb Z$ is the set of integers.
For any point $p\in\IR^d$, we use $p(x_i)$ to denote the $i$th coordinate of $p$, where $i\in[d]$. The point $p$ is an \emph{integer point} if for each $i\in[d]$, the coordinate $p(x_i)$ is an integer. 
By an \emph{object}, we refer to a compact set in $\mathbb{R}^d$ having a nonempty interior. Let $d_{\infty}(.,.)$  represents the distance under the $L_{\infty}$-norm.
    Given a set system $(\cal X, \cal R)$ and any set $Y \subseteq \cal X$, the \emph{projection} of $\cal R$ onto $Y$ is defined as the set system: $\mathcal{R} |_Y = \{Y \cap r :  r \in \cal R\}$.
    The \emph{VC dimension} of $\cal R$, denoted by VC-dim($\mathcal{R}$) is the size of the largest $Y\subseteq \mathcal X$ for which $\mathcal{R}|_Y = 2^Y$.

\begin{theorem}
    \textbf{(Epsilon-net Theorem)} [\cite{Haussler1987netsAS}] Let $(\cal X, \cal R)$ be a set system with VC-dim($\cal R$) $\le d$ for some constant $d$, and let $\varepsilon > 0$ be a given parameter. Then there exists ab absolute constant $c_a > 0$ such that a random $N$ constructed by picking of $X$ independently with probability $\left(c_a . (\frac{1}{\varepsilon |X|}) \log \frac{1}{\gamma} + \frac{d}{\varepsilon|X|}\log \left(\frac{1}{\varepsilon}\right)\right)$ is an $\varepsilon$-net for $\cal R$ with probability at least $1-\gamma$.
    
\end{theorem}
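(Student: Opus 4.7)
The plan is to follow the classical double-sampling (symmetrization) argument of Haussler and Welzl. Set $p := c_a \cdot \frac{1}{\varepsilon|\mathcal X|}\log\frac{1}{\gamma} + \frac{d}{\varepsilon|\mathcal X|}\log\frac{1}{\varepsilon}$, so that $\mathbb{E}[|N|] = p|\mathcal X|$ is the desired $\varepsilon$-net size. Define the bad event $A$ that some range $r \in \mathcal R$ with $|r \cap \mathcal X| \ge \varepsilon|\mathcal X|$ is missed by $N$, i.e.\ $r \cap N = \emptyset$. The goal is to show $\Pr[A] \le \gamma$.

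To circumvent the issue that $\mathcal R$ can be infinite, I would introduce a second independent Bernoulli$(p)$ sample $N'$ of $\mathcal X$ (the \emph{ghost sample}) and consider the event $B$ that there exists $r \in \mathcal R$ with $r \cap N = \emptyset$ and $|r \cap N'| \ge \tfrac{1}{2}\varepsilon p |\mathcal X|$. The first step is to prove $\Pr[A] \le 2\,\Pr[B]$: condition on a fixed violating range $r$ witnessing $A$; since $|r \cap \mathcal X| \ge \varepsilon|\mathcal X|$ and $N'$ is independent of $N$, a Chernoff bound gives that $|r \cap N'|$ exceeds half its expectation $\varepsilon p|\mathcal X|$ with probability at least $1/2$, provided $\varepsilon p|\mathcal X|$ is a sufficiently large constant multiple (which is ensured by the $\frac{d}{\varepsilon}\log\frac{1}{\varepsilon}$ term).

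Next I would bound $\Pr[B]$ by symmetrization. Condition on the combined multiset $T := N \cup N'$; given $T$, each element is placed independently into $N$ or $N'$ with probability $1/2$. By the Sauer–Shelah lemma, $|\mathcal R|_T| \le \binom{|T|}{\le d} = O(|T|^d)$, so only polynomially many distinct ranges matter. For a fixed such range $r$ with $k := |r \cap T| \ge \tfrac{1}{2}\varepsilon p|\mathcal X|$, the probability that all $k$ of its points fall in $N'$ (so that $r \cap N = \emptyset$) is $2^{-k} \le 2^{-\varepsilon p|\mathcal X|/2}$. A union bound over the $O(|T|^d)$ ranges, followed by removing the conditioning (using that $|T|$ concentrates around $2p|\mathcal X|$), yields
\[
\Pr[B] \;\le\; O\!\left((2p|\mathcal X|)^d\right) \cdot 2^{-\varepsilon p|\mathcal X|/2}.
\]

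The final step is to check that the two terms in the definition of $p$ are chosen to kill this bound: the term $\tfrac{d}{\varepsilon|\mathcal X|}\log\tfrac{1}{\varepsilon}$ ensures $2^{-\varepsilon p|\mathcal X|/2}$ dominates the polynomial prefactor $(2p|\mathcal X|)^d$, while the term $\tfrac{c_a}{\varepsilon|\mathcal X|}\log\tfrac{1}{\gamma}$ pushes the remaining product below $\gamma/2$, giving $\Pr[A] \le 2\Pr[B] \le \gamma$. The main obstacle is tuning the constants cleanly: one must handle the Chernoff lower-tail in Step 2 (which requires $\varepsilon p|\mathcal X|$ above a fixed threshold), manage the randomness of $|T|$ when converting the conditional bound into an unconditional one, and choose $c_a$ large enough that both terms in $p$ simultaneously dominate their respective error contributions. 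These are all standard but must be executed carefully because the sample here is Bernoulli rather than fixed-size.
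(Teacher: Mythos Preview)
The paper does not prove this theorem at all: it is stated in Section~\ref{notation} as a background result quoted from Haussler and Welzl~\cite{Haussler1987netsAS}, with no proof or even proof sketch provided. There is therefore nothing in the paper to compare your proposal against.

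That said, your proposal is the classical Haussler--Welzl double-sampling argument and is correct in outline. The three phases---(i) reduce the bad event $A$ to the ghost-sample event $B$ via a Chernoff lower tail on $|r\cap N'|$, (ii) condition on $T=N\cup N'$ and use Sauer--Shelah to cut the number of relevant ranges to $O(|T|^d)$, (iii) union-bound with the $2^{-k}$ swap probability and then tune $p$---are exactly the standard steps. Your caveats about handling the Bernoulli (rather than fixed-size) sample, the concentration of $|T|$, and the constant juggling are the right places to be careful, but none of them hides a real obstruction. If you wanted to write this up in full you would only need to be explicit about the threshold $\varepsilon p|\mathcal X|\ge c$ needed for the Chernoff step in (i) and to verify that the two summands in $p$ suffice to make $(2p|\mathcal X|)^d\cdot 2^{-\varepsilon p|\mathcal X|/2}\le \gamma/2$; both are routine.
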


\paragraph{Online $\varepsilon$-Net.} Let $\Sigma =(\mathcal{X},\mathcal{R})$ be a 
set system, where $\mathcal{X}$ is a universe of points in $\mathbb{R}^d$ and $\mathcal{R}$ is a set of ranges defined over $\mathcal{X}$. We assume that $\mathcal{X}$ is fixed in advance and the ranges in $\mathcal{R}$ are coming one by one. 
Let $\alg$ be an algorithm for 
\emph{online $\varepsilon$-net} for $\Sigma$. The expected competitive ratio of $\alg$ with respect to $\Sigma=(\mathcal{X},\mathcal{R})$ is defined by, $\rho (\alg) = \sup_{\sigma} \left[ \frac{\alg(\sigma)}{\OO(\sigma)}\right]$,
where the supremum is taken over all input sequences $\sigma$, $\OO(\sigma)$ is the minimum cardinality $\varepsilon$-net for $\sigma$, and
$\alg(\sigma)$ denotes the size of the net produced by $\alg$ for this input.
The objective is to design an algorithm that obtains the minimum competitive ratio. If the $\alg$ is a randomized algorithm, then we replace $\alg(\sigma)$ by $\mathbb{E}[\alg(\sigma)]$~\cite[Ch.~1]{BorodinE}.

 \paragraph{Online Piercing Set.} 
 Consider a set system $\Sigma = (\mathbb{R}^d, \mathcal{R})$, where the ranges in $\mathcal{R}$ are coming one by one. Let $\alg$ be an algorithm for 
\emph{online $\varepsilon$-net}  for $\Sigma$. The expected competitive ratio of $\alg$ with respect to $\Sigma=(\mathbb{R}^d,\mathcal{R})$ is defined by, $\rho (\alg) = \sup_{\sigma} \left[ \frac{\alg(\sigma)}{\OO(\sigma)}\right]$,
where the supremum is taken over all input sequences $\sigma$, $\OO(\sigma)$ is the minimum cardinality piercing set for $\sigma$, and
$\alg(\sigma)$ denotes the size of the net produced by $\alg$ for this input.
The objective is to design an algorithm that obtains the minimum competitive ratio~\cite[Ch.~1]{BorodinE}.

\textbf{$\alpha$-Fat Objects.} The notions of 
\emph{fatness} have been heavily exploited in high-dimensional Geometry and Learning Theory. There exists several definitions of fatness in the literature due to~\cite{AlefkhaniKM23, Chan03, DeJKS24}. 
We use the most standard definition here which is defined with respect to the aspect ratio of the objects.  
Let $\sigma$ be an {object}. For any point $x\in \sigma$, we define $\alpha(x)= \frac{\min_{y\in \partial\sigma}d_{\infty}(x,y)}{\max_{y\in \partial\sigma}d_{\infty}(x,y)}$.
The \emph{aspect ratio} $\alpha(\sigma)=\max \{\alpha(x)  : x \in  \sigma\}$. An object is considered an \emph{$\alpha$-fat object} if its aspect ratio is at least $\alpha$. 
A point $c\in \sigma$ with $\alpha(c)=\alpha(\sigma)$ is defined as a \emph{center} of the object $\sigma$. The minimum (respectively, maximum) distance from  the center to the boundary of the object is referred to as the \emph{width} (respectively, \emph{height}) of the object. 
A set $\cal S$  of objects is considered \emph{fat} if there exists a constant $0<\alpha\leq 1$ such that each object in $\cal S$ is $\alpha$-fat. 
Note that for a set $\cal S$ of fat objects,  each object $\sigma\in {\cal S}$ does not need to be convex or connected.
For an $\alpha$-fat object, the value of $\alpha$ is invariant under translation, reflection, and scaling.
A set $\cal S$  is said to be \emph{similarly sized fat objects} when the ratio of the largest width of an object in $\cal S$ to the smallest width of an object in $\cal S$  is bounded by a fixed constant.

\section{Online $\varepsilon$-net}\label{Sect_net}

In Section~\ref{Sect_Intervals}, we analyze the performance of a simple deterministic algorithm for online $\varepsilon$-net of arbitrary intervals, which gives asymptotically tight competitive ratios. Then, in Section~\ref{sect_rectangles}, we analyse the performance of a randomized  algorithm for online $\varepsilon$-net of arbitrary boxes in $\IR^d (d \le 3)$.

\subsection{Online $\varepsilon$-net for Arbitrary Intervals}\label{Sect_Intervals}

In this section, we consider a finite range space $(\mathcal{X}, \mathcal{R})$, where $\mathcal{X}$ is a set of $n$ points in $\mathbb{R}$, and $\mathcal{R}$ is the set of arbitrary intervals. In the online setting, the set $\cal X$ is known in advance, and the adversary introduces the intervals one by one at each step. Our objective is to construct an $\varepsilon$-net $\N\subset {\cal X}$ for $(\mathcal{X}, \mathcal{R})$ that hits each $\varepsilon$-heavy interval in $\mathcal{R}$. 

We present a deterministic online algorithm $\AI$, which maintains an $\varepsilon$-net $\N$. Initially, $\N = \emptyset$.
At each step, we update the set to hit all the $\varepsilon$-heavy intervals observed so far. 
Here, $\OO$ refers to the optimal $\varepsilon$-net produced by an offline algorithm that computes the best possible solution.

Let $\sigma$ be an interval containing $n$ points. We partition the interval $\sigma$ into $2$ disjoint smaller sub-intervals, each containing at most $\lfloor n/2 \rfloor$ points. Let $P_\sigma^j$ be a $j$th sub-interval of $\sigma$, where $j\in\{\ell,r\}$.

\paragraph{Online algorithm.} We can now present our online algorithm $ALG$. The algorithm maintains a hitting set $\H$ for all disks that have been part of the input so far. Initially, $\H=\emptyset$. Upon the arrival of a new interval $\sigma$, if $|\sigma\cap {\cal X}|< \varepsilon |{\cal X}|$, then ignore $\sigma$; else we do the following. If it is already hit by $\H$, ignore $\sigma$. Otherwise,  sort the points of $\sigma \cap {\cal X}$ in the increasing order, say $p_1,\ldots,p_{|\sigma\cap {\cal X}|}$, and 
hit $\sigma$ by the point indexed $\Big\lfloor\frac{|\sigma\cap{\cal X}|}{2}\Big\rfloor$ and $\Big\lceil\frac{|\sigma\cap{\cal X}|}{2}\Big\rceil$. Add the above-mentioned points points to $\H$. For a concise description of Algorithm see the 
pseudo-code~\ref{alg_interval}.
\begin{algorithm}[htbp]
\caption{Algorithm $\AI$ for Construction of Online $\varepsilon$-Net $\N$ for arbitrary intervals}
\begin{algorithmic}[1]
\State Initialize net  $\N = \emptyset$
\While{new interval $\sigma$ arrives}
    \If{$|\sigma \cap \mathcal{X}| < \varepsilon |\mathcal{X}|$}
        \State Ignore $\sigma$.
    \Else
        \If{$\sigma$ is already hit by $N$}
            \State Ignore $\sigma$.
        \Else
            \State Sort the points in $\sigma \cap \mathcal{X}$ as $p_1, p_2, \ldots, p_{|\sigma \cap \mathcal{X}|}$.
            \State Hit $\sigma$ with points indexed by $\lfloor\frac{|\sigma \cap \mathcal{X}|}{2}\rfloor$ and $\lceil\frac{|\sigma \cap \mathcal{X}|}{2}\rceil$.
            \State Add these points to $N$.
        \EndIf
    \EndIf
\EndWhile
\State \textbf{Return} $\N$
\end{algorithmic}
\label{alg_interval}
\end{algorithm}

Since we hit all the $\varepsilon$-heavy unhit sets at each step, clearly, $\AI$ produces an $\varepsilon$-net. 
We need to prove that the size of the net $\N$ produced by $\AI$ is at most $2 \left(\log \left(\frac{1}{\varepsilon}\right)+1\right)$ times the size of the offline optimal net $\OO$.

\begin{theorem}\label{thm_interval}
For online $\varepsilon$-net of arbitrary intervals, there exists a deterministic online algorithm with a competitive ratio of $2\left(\log\left(\frac{1}{\varepsilon}\right) + 1\right)$, for any $\varepsilon\in(0,1]$.
\end{theorem}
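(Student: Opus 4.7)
The algorithm's correctness is immediate: whenever an $\varepsilon$-heavy interval $\sigma$ is not already pierced, $\AI$ explicitly adds the median point(s) of $\sigma\cap\mathcal{X}$ to $\N$, guaranteeing that $\N$ is a valid $\varepsilon$-net. All the work lies in bounding $|\N|$ against $|\OO|$. The plan is to charge every point added to $\N$ to some $p^*\in\OO$ and show that each optimal point absorbs at most $2(\log(1/\varepsilon)+1)$ charges.

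When an iteration processes an unhit $\varepsilon$-heavy interval $\sigma$, the fact that $\OO$ is itself a valid $\varepsilon$-net forces $\sigma\cap\OO\neq\emptyset$, so there are always witnesses in $\sigma\cap\OO$ to absorb the charges. I plan to charge the lower median $m_L$ to the leftmost point of $\sigma\cap\OO$ and the upper median $m_R$ to the rightmost one (which may coincide if $|\sigma\cap\OO|=1$, and which are trivially equal when $m=|\sigma\cap\mathcal{X}|$ is even). It therefore suffices to bound the total charges collected by any fixed $p^*\in\OO$.

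For fixed $p^*$, I track its \emph{free region} $(L_t,R_t)$, the maximal open interval around $p^*$ that is disjoint from the current net $\N_t$, together with $L_t^*:=|\mathcal{X}\cap(L_t,p^*)|$ and $R_t^*:=|\mathcal{X}\cap(p^*,R_t)|$. Any interval charged to $p^*$ at step $t+1$ is unhit, hence lies entirely in $(L_t,R_t)$, so $|\sigma_{t+1}\cap\mathcal{X}|\le L_t^*+R_t^*+1$. With $m:=|\sigma_{t+1}\cap\mathcal{X}|$ and medians $m_L,m_R$, either $p^*\in\{m_L,m_R\}$ -- in which case $p^*$ enters $\N_{t+1}$ and can never again be charged -- or $p^*$ lies strictly on one side. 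In the case $p^*<m_L$ the new right barrier becomes $R_{t+1}=m_L$, and since $m_L$ is the lower median of $\sigma_{t+1}\cap\mathcal{X}$, a direct count yields $R_{t+1}^*\le R_t^*/2$ while $L_{t+1}^*=L_t^*$; the case $p^*>m_R$ is symmetric, halving $L^*$ instead.

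Because $\sigma_{t+1}$ is $\varepsilon$-heavy, $m\ge\varepsilon|\mathcal{X}|$, and because the bulk of its $\mathcal{X}$-points must sit on the side of $p^*$ opposite to the medians, one obtains $R_t^*\gtrsim\varepsilon|\mathcal{X}|/2$ at every $R^*$-halving (and symmetrically $L_t^*\gtrsim\varepsilon|\mathcal{X}|/2$ at every $L^*$-halving). Starting from $L_0^*,R_0^*\le|\mathcal{X}|$, each side can therefore be halved at most $\log(1/\varepsilon)+1$ times, yielding at most $2(\log(1/\varepsilon)+1)$ charges collected by $p^*$. Summing over $p^*\in\OO$ gives $|\N|\le 2(\log(1/\varepsilon)+1)\,|\OO|$, as claimed. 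The main technical obstacle is the bookkeeping for the charging: one must verify that the proposed scheme assigns every median to some optimal point without double-counting and recovers the precise constant $2(\log(1/\varepsilon)+1)$ uniformly across all corner cases -- the parity of $m$, coincident medians when $m$ is even, degenerate small intervals, the singleton case $|\sigma\cap\OO|=1$, and the termination step in which $p^*$ itself becomes a median.
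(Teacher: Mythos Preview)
Your free-region potential argument is a genuinely different route from the paper's proof, which instead \emph{buckets} the $\varepsilon$-heavy intervals containing a fixed $p\in\OO$ into at most $\lceil\log(1/\varepsilon)\rceil+1$ classes according to $|\sigma\cap\mathcal{X}|\in[2^i\varepsilon n,\,2^{i+1}\varepsilon n)$, and shows that within each class the algorithm adds at most two points before every remaining interval in that class is hit. The bucketing argument is shorter and yields the constant $2$ directly; your potential approach is more dynamic and arguably more intuitive, and it does correctly establish the asymptotic $O(\log(1/\varepsilon))$ ratio.

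However, there is a real gap in recovering the \emph{exact} constant $2(\log(1/\varepsilon)+1)$. Your key inequality ``\# charges to $p^*\le$ \# halvings of its free region'' fails precisely in the corner case you flagged: when $|\sigma\cap\OO|=1$ and $m$ is odd, the single optimal point $p^*$ absorbs \emph{two} charges (one for $m_L$, one for $m_R$), yet only one side of its free region halves (since both medians lie on the same side of $p^*$). Concretely, take $n=100$, $\varepsilon=0.05$, $p^*=10$, and feed the intervals $[1,100],[1,49],[1,23],[1,10],[6,10]$: all contain $p^*$, $|\OO|=1$, the algorithm places $8$ points, but the free region of $p^*$ undergoes only $5$ halvings. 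So the implication ``$2(\log(1/\varepsilon)+1)$ halvings $\Rightarrow$ $2(\log(1/\varepsilon)+1)$ charges'' breaks; as written, your scheme only proves a bound of $4(\log(1/\varepsilon)+O(1))$. To match the stated constant you would need either a more refined charging (e.g., one that ties $m_L$-charges and $m_R$-charges separately to $R^*$- and $L^*$-halvings, which does not hold under your leftmost/rightmost rule) or to switch to the paper's size-bucket decomposition, where the factor $2$ is simply ``two medians per bucket.''
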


\begin{proof} 
Let $\mathcal I$ be a collection of intervals presented to the online algorithm. Let ${\cal S}\subseteq {\mathcal I}$ be the collection of intervals $[a,b]$ such that $|[a,b]\cap {\cal X}|\geq \varepsilon |{\cal X}|$. For each $i\in\Big{[}{\lceil}\log_2{\frac{1}{\varepsilon}}{\rceil}\Big{]}\cup\{0\}$, let ${\cal S}_i$ be the collection of intervals $[x,y]$ in $\cal S$ such that $|[x,y]\cap {\cal X}|\in\left[2^i\varepsilon|{\cal X}|, 2^{i+1}\varepsilon|{\cal X}|\right)$. Let $\A$ and $\OO$ denote the sub-collection of $\varepsilon$-nets returned by the online algorithm and an optimal offline algorithm, respectively, for~$\cal S$. Let $p$ be a point in $\OO$. Let ${\cal S}(p)\subseteq {\cal S}$ be the set of intervals that contains the point $p$. For each $i\in\Big{[}{\lceil}\log_2{\frac{1}{\varepsilon}}{\rceil}\Big{]}\cup\{0\}$, let ${\cal S}_i(p)={\cal S}(p)\cap{\cal S}_i$. Let $\A_i(p)\subseteq \A$ be the set of points that are placed by the online algorithm to hit an input interval in ${\cal S}_i(p)$ which is not hit.
We claim that our online algorithm places at most $2$ points for all intervals in ${\cal S}_i(p)$. Without loss of generality, let us assume that $\sigma\in {\cal S}_i(p)$ is the first input interval that is not hit upon its arrival. In order to hit $\sigma$, our online algorithm adds the points indexed $\Big\lfloor\frac{|\sigma\cap{\cal X}|}{2}\Big\rfloor$ and $\Big\lceil\frac{|\sigma\cap{\cal X}|}{2}\Big\rceil$ to $\A_i(p)$. 
Notice that we can partition $\sigma$ into two disjoint sub-intervals $P_\sigma^{\ell}$ and $P_\sigma^{r}$ such that they contain $\Big\lfloor\frac{|\sigma\cap{\cal X}|}{2}\Big\rfloor$ and $\Big\lceil\frac{|\sigma\cap{\cal X}|}{2}\Big\rceil$ points, respectively. 
Let $P_\sigma^t$ contains the point $p$, where $t\in\{\ell,r\}$. Let $\sigma'(\neq \sigma) \in  {\cal S}_i (p)$ be any interval. Observe that, $|P_\sigma^t| < \Big\lfloor\frac{|\sigma\cap{\cal X}|}{2}\Big\rfloor < 2^{i}\varepsilon |{\cal X}|$. Also,  by definition, $\sigma'\in {\cal S}_i(p)$ contains at least $2^{i}\varepsilon |{\cal X}|$ points, and $\sigma'\cap P_\sigma^t \neq \emptyset$.  As a result, $\sigma'$ hit by either the point of $\sigma$ indexed $\Big\lfloor\frac{|\sigma\cap{\cal X}|}{2}\Big\rfloor$ or $\Big\lceil\frac{|\sigma\cap{\cal X}|}{2}\Big\rceil$. Therefore, our algorithm does not add any point to $\A_j(p)$ for $\sigma'$. Thus, $\A(p)$ contains at most $2\left({\lceil}\log_2 \frac{1}{\varepsilon}{\rceil}+1\right)$ points. Hence, we conclude the proof of the theorem. 
\end{proof}

\subsection{Online $\varepsilon$-net for Arbitrary Axis-aligned Rectangles in $\IR^2$}\label{sect_rectangles}
In this section, we consider a finite range space $(\mathcal{X}, \mathcal{R})$, where $\mathcal{X}$ is a set of $n$ points in $\IR^2$, and $\mathcal{R}$ is the set of axis-aligned rectangles. In the online setup, the points are known in advance, and the rectangles are introduced one by one. Our goal is to construct an $\varepsilon$-net $N$ for $(\mathcal{X}, \mathcal{R})$ that hit all $\varepsilon$-heavy rectangles in $\mathcal{R}$.
Before describing the online algorithm, we first introduce some crucial ingredients that will play an essential role in designing the algorithm.

\paragraph{Construction of the balanced binary tree $\cal T$.}
We construct a balanced binary search tree $\mathcal{T}$ over the point set $\mathcal{X}$, where $|\mathcal{X}| = n$. Without loss of generality, assume $n = 2^k$ for some $k \in \mathbb{Z}^{+}$. The root node at level $0$ contains all $n$ points. At each level, the parent node splits into two child nodes, each containing half of the points of its parent. This process continues until each node has fewer than $\varepsilon n$ points, resulting in a tree of depth $O\left(\log \left(\frac{1}{\varepsilon}\right)\right)$, with each leaf containing $O(\varepsilon n)$~points.

\paragraph{Construction of a random sample.} Let $\mathcal{P}$ be a random sample of size $O\left({\varepsilon} \log \log \left(\frac{1}{\varepsilon}\right)\right)$, where $\varepsilon \in \left(\frac{1}{C}, 1\right]$ for sufficiently large constant $C > 1$. The points are drawn uniformly at random from $\mathcal{X}$ with a probability of $\pi = O\left(\frac{\varepsilon \log \log \left(\frac{1}{\varepsilon}\right)}{n}\right)$. Note that the selection of $\mathcal{P}$ and its size is extremely crucial, as it directly influences the competitive ratio of the algorithm.

\paragraph{Connection between the tree and the random sample.} 
Each node $v$ of the tree $\mathcal{T}$ is associated with a subset $\mathcal{X}_v \subseteq \mathcal{X}$ (and similarly, $\mathcal{P}_v \subseteq \mathcal{P}$) containing the points of $\mathcal{X}$ (resp. $\mathcal{P}$) stored in the subtree rooted at $v$. A line $l_v$ corresponding to each internal node $v$ divides the point set $\mathcal{X}_v$ into two subsets, $\mathcal{X}_{v_1}$ and $\mathcal{X}_{v_2}$, associated with the children $v_1$ and $v_2$ of $v$, respectively.
Corresponding to each  node $v_i$ (except root) and it's parent $v$, lines $l_{v_i}$ and $l_v$ define the strip $s_{v_i}$. 

\paragraph{Construction of maximal ${\mathcal P}_v$-unhit open rectangles set $\mathcal{M}_v$.} For each node $v$ in strip $s_v$, containing $|\mathcal{P}_v|$ points from $\mathcal{P}$,  located between lines $l_v$ and $l_{\text{parent}(v)}$. Without loss of generality, let $l_v$ is the left boundary of $s_v$. We need at most three points from $\mathcal{P}_v$ to create a $\mathcal{P}$-unhit open rectangle $M$. A triplet of points $(a, b, c)$ defines three sides of a rectangle $M$: right ($a$), top ($b$), and bottom ($c$), with the left side determined by the line $l_v$ (refer figure~\ref{fig:3p}). If either $b$ or $c$ is missing, the corresponding side is extended infinitely (see Figures~\ref{fig:2pa}). If $a$ is missing, the right side of $M$ extends until it reaches the right boundary line of the strip $s_v$ (see Figure~\ref{fig:2pb}). 
For each point $a \in \mathcal{P}_v$, the nearest top-left ($b$) and bottom-left ($c$) neighbors define the upper and lower boundaries of the rectangle. If no such neighbors exist, those sides are extended to infinity. This process generates up to $|\mathcal{P}_v|$ rectangles, where each right side is defined by a point from $\mathcal{P}_v$ (Figure \ref{fig:2pc}).
Rectangles defined solely by the top and/or bottom points ($b$ and/or $c$) are formed by pairing consecutive points along the y-axis, with the right side extending to the opposite boundary of the strip (Figure \ref{fig:1p}). This leads to the construction of $|\mathcal{P}_v| + 1$ rectangles.
Thus, each strip $s_v$ contains up to $2|\mathcal{P}_v| + 1$ maximal $\mathcal{P}$-unhit rectangles.

    From the above description, the number of maximal $\mathcal{P}$-unhit open rectangles $\mathcal{M}_v$ within each strip $s$ is bounded by $O(|\mathcal{P}|)$. Since the number of nodes in the tree $\mathcal{T}$ is $O(2^{\log \frac{1}{\varepsilon}})$, the total number of maximal $\mathcal{P}$-unhit open rectangles across all strips is at most $O\left(|\mathcal{P}| \cdot \frac{1}{\varepsilon}\right)$.

\begin{figure}[htbp]
  \centering
     \begin{subfigure}[b]{0.19\textwidth}
         \centering
\includegraphics[page=1,scale=0.3]{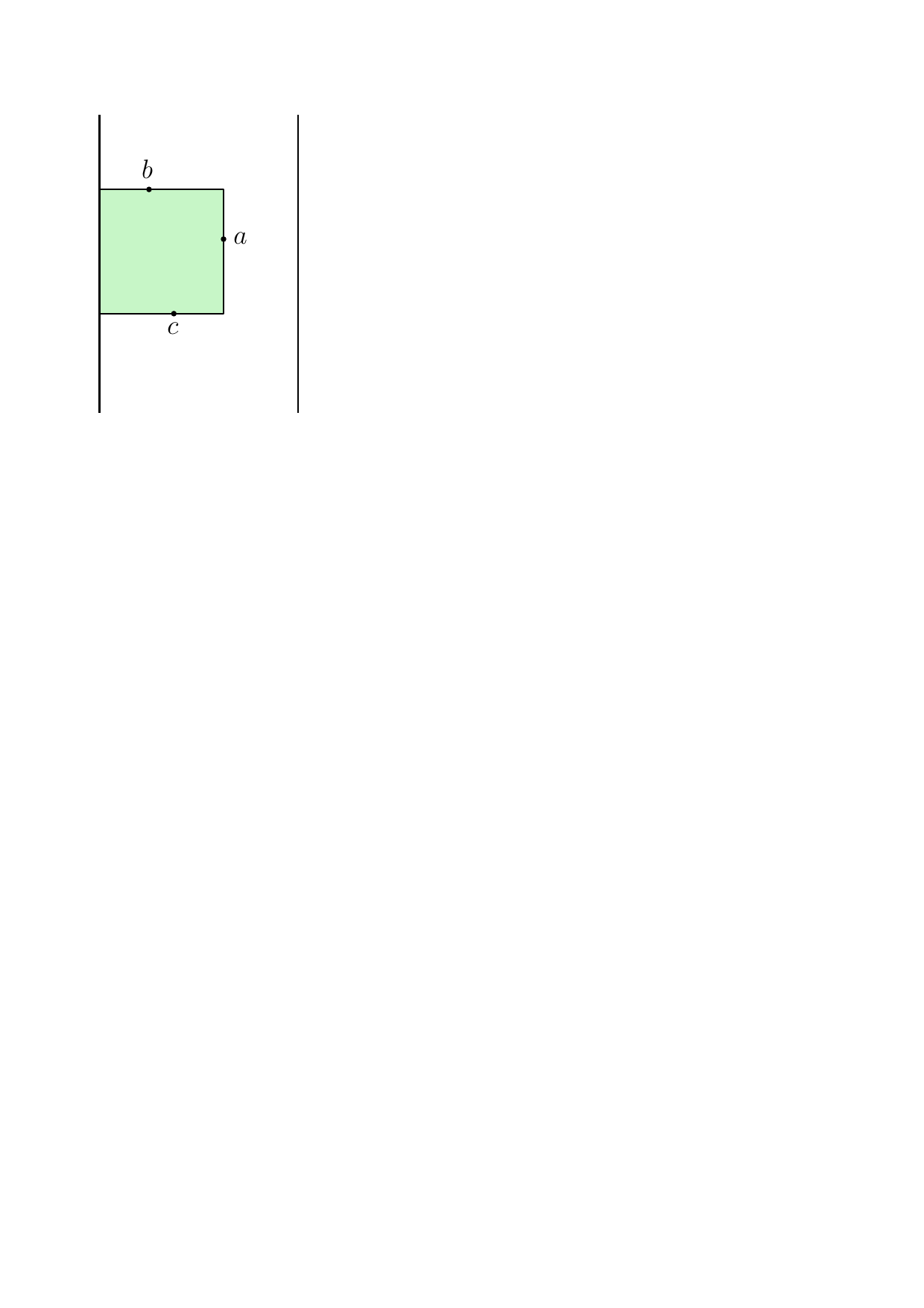}
\caption{}
\label{fig:3p}
     \end{subfigure}
     \hfill
      \begin{subfigure}[b]{0.19\textwidth}
         \centering
\includegraphics[page=2,scale=0.3]{Description.pdf} 
\caption{}
\label{fig:2pa}
     \end{subfigure}
     \hfill
      \begin{subfigure}[b]{0.19\textwidth}
         \centering
\includegraphics[page=3,scale=0.3]{Description.pdf} 
\caption{}
\label{fig:2pb}
     \end{subfigure}
     \hfill
      \begin{subfigure}[b]{0.19\textwidth}
         \centering
\includegraphics[page=4,scale=0.3]{Description.pdf} 
\caption{}
\label{fig:2pc}
     \end{subfigure}
     \hfill
     \begin{subfigure}[b]{0.19\textwidth}
         \centering
\includegraphics[page=5,scale=0.3]{Description.pdf}
\caption{}
\label{fig:1p}
     \end{subfigure}
     
      \caption{(a)  A triplet of points $(a, b, c)$ defines three sides of a rectangle $M$. (b) If either \textit{top} or \textit{bottom} point is missing, the corresponding side is extended infinitely. (c) If \textit{right} point is missing, the right side of $M$ extends until it reaches the right boundary line of the strip. (d) Rectangles defined solely by \textit{right} side point. (e) Rectangles defined solely by \textit{top} and/or \textit{bottom} side point.}
       \label{fig_desc}
\end{figure}

\paragraph{Finding a suitable ${\cal P}_v$-unhit open rectangle}
If an $\varepsilon$-heavy rectangle $\sigma$ arrives and is not hit by $\mathcal{P}$, let $v$ be the highest node of $\mathcal{T}$  such that associated line $l_v$ intersects $\sigma$. We take the sub-rectangle $\sigma'$, which contains at least $\frac{\varepsilon n}{2}$ points.
Next, we extend $\sigma'$ to the right until it hits a point of $\mathcal{P}_v$ or reaches the opposite boundary of the strip $s_v$. Similarly, we extend it upwards (resp., downwards) until it intersects a point of $\mathcal{P}_v$, or treat it as an open rectangle. This extended rectangle contains $\sigma'$ and is included in the set $\mathcal{M}_v$.

\paragraph{Construction of safety-net.}
For each node $v$ of $\mathcal{T}$ and each rectangle $M \in \mathcal{M}_v$, define the weight as $w_M = \frac{s |M \cap \mathcal{X}|}{n}$, where $ s = \frac{2}{\varepsilon} \delta $, and $\delta$ is a small constant greater than 1. Using the $\varepsilon$-net theorem, we can construct a $\frac{1}{w_M}$-net, denoted as $ N_M $, for each $ M \cap \mathcal{X}_v$, of size $ O(w_M \log w_M) $. These serve as \textit{safety-nets} that hit every $\varepsilon$-heavy input rectangle $\sigma$ that intersects the strip $ s_v$ but is not hit by $\mathcal{P}$.

The final $\varepsilon$-net $N$ for  ($\mathcal{X}, \mathcal{R})$ is the union of $\mathcal{P}$ with the safety-nets $N_M$ for all $M$. 
Now, we have all the necessary ingredients to describe the algorithm.

\paragraph{Online algorithm.} 
Let $\mathcal{P} \subseteq \mathcal{X}$ be a random sample of size $O\left(\varepsilon \log\log \left(\frac{1}{\varepsilon}\right)\right)$. In addition to $\cal P$, the algorithm also maintains a safety-net $SN$, with $N = \mathcal{P} \cup SN$. Initially, $\I$, $\mathcal{P}$, and $SN$ are empty. For each new rectangle $\sigma$ presented, update $\I = \I \cup {\sigma}$. If $\sigma$ contains any point from $\mathcal{P}$, then we are done. If not, check whether $\sigma$ intersects any point from the current safety-net $SN$. If it does, no further action is needed. Otherwise, find the highest node $v$ in the tree $\mathcal{T}$ where the associated line $l_v$ intersects $\sigma$. Identify a sub-rectangle $\sigma' \subseteq \sigma$ containing at least $\frac{\varepsilon n}{2}$ points, then extend $\sigma'$ to form a $\mathcal{P}_v$-unhit rectangle $M$. Finally, add all points from the $\frac{1}{w_M}$-net $N_M$ to the safety-net $SN$.
For a concise description of 
Algorithm see the pseudo-code~\ref{alg_rect}.

\begin{algorithm}[htbp]
\caption{Construction of Online $\varepsilon$-Net $N$ for axis-aligned rectangles}
\begin{algorithmic}[1]
\State Fix a random sample $\mathcal{P} \subseteq \mathcal{X}$
\State Construct a balanced binary tree $\mathcal{T}$ over $\mathcal{P}$ 
\State For each node $v$, construct the set of maximal open $\mathcal{P}_v$-unhit rectangles $\mathcal{M}_v$
\State For each rectangle $M \in \mathcal{M}_v$ define, $w_M = \frac{s |M \cap \mathcal{X}|}{n}$.
\State Construct the safety-net $N_M$ for each $M \in \mathcal{M}_v$
\State Initialize  empty set: $ SN = \emptyset$

\While{an input $\varepsilon$-heavy rectangle $\sigma$ introduced }
\If{$\sigma \cap \mathcal{P} = \emptyset$}
\If{$\sigma \cap SN = \emptyset$}
    \State Find the highest node $v$ such that $l_v$ intersects $\sigma$.
    
    \State Identify a subrectangle $\sigma' \subseteq \sigma $ s.t $|\sigma'| \le \frac{\varepsilon n}{2}$.
    
    \State Extend $\sigma'$ to form a $\mathcal{P}$-unhit rectangle $M \in \mathcal{M}_v$.
    
    \State Add points from safety-net $N_M $ to $SN$.
    
    \EndIf
    \EndIf
\EndWhile
\State \Return the final online net ${N =\mathcal{P} \cup SN}$.
\end{algorithmic}
\label{alg_rect}
\end{algorithm}

\textit{Readers familiar with the technique of \cite{AronovES09} will recognize the similarities between our approach and theirs. However, the key distinctions in our algorithm lie in a different selection of the random sample, a different weight assigned to each constructed rectangle $M \in \mathcal{M}$, and consequently, the size of the resulting online $\varepsilon$-net changes.}

\paragraph{Correctness.}
Note that $N$ consists of a random sample $\mathcal P \subseteq \mathcal{X}$ along with $w_M$-net for each $M \in \mathcal{M}$. 
Let $\sigma$ be a $\varepsilon$-heavy rectangle. If the input rectangle $\sigma$ is hit by $\mathcal{P}$, we are done. 
If $\sigma$ does not contains any point from $\mathcal{P}$, then we will show that $\sigma$ will be hit by a point from the safety-net $N_M$, corresponding to some $\cal P$-unhit open rectangle $M$. 
Recall that if an $\varepsilon$-heavy rectangle $\sigma$ arrives and is not hit by $\mathcal{P}$, we consider the highest node $v$ of $\mathcal{T}$  such that the associated line $l_v$ intersects $\sigma$. We take the sub-rectangle $\sigma'$, which contains more than half points of $\sigma$.
Next, we find the maximal $P_v$-unhit open rectangle $M\in {\cal M}_v$ such that $\sigma'$ is completely contained in $M$. Also, recall that teh weight of every $M$ was $w_M=\frac{s|M\cap {\cal X}|}{n}$, where $s=\frac{2\delta}{\varepsilon}$. Due to $\varepsilon$-net theorem, for any $M \in \mathcal{M}_v$ we can construct $1/w_M$-net, $N_M$ for $M$. Note that $\frac{|\sigma' \cap \mathcal{X}|}{|M \cap \mathcal{X}|}\geq\frac{\varepsilon n/2}{n w_M/s}\geq\frac{1}{w_M}$. 
Hence, due to the definition of $\varepsilon$-net that $N_M$ hits $\sigma'$.

\begin{theorem}
For the online $\varepsilon$-net problem with arbitrary axis-aligned rectangles, there exists an algorithm with an expected competitive ratio of at most $O\left(\log\left(\frac{1}{\varepsilon}\right)\right)$. Here, $\varepsilon \in \left(\frac{1}{C}, 1\right]$, where $C$ is a sufficiently large constant.
\end{theorem}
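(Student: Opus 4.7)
The plan is to bound $\mathbb{E}[|N|]$ where $N = \mathcal{P} \cup SN$, and to compare it against any offline optimum $|OPT|$. Correctness, i.e.\ that every $\varepsilon$-heavy rectangle is hit by either $\mathcal{P}$ or by its associated safety-net $N_M$, has already been argued in the excerpt, so what remains is a size analysis and the ratio bound.

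First, I would compute $\mathbb{E}[|\mathcal{P}|]$ directly from the sampling probability $\pi$ by linearity of expectation. For the safety-nets, the crucial structural step is a concentration lemma about the random sample: applying the $\varepsilon$-net theorem to the VC-dimension-$4$ range space of axis-aligned rectangles (or a direct Chernoff-plus-union-bound argument over the polynomially many candidate maximal empty rectangles per strip), one shows that with high probability over $\mathcal{P}$, every maximal $\mathcal{P}_v$-unhit open rectangle $M$ in any strip $s_v$ satisfies $|M \cap \mathcal{X}_v| = O\!\left((|\mathcal{X}_v|/|\mathcal{P}_v|)\log|\mathcal{P}_v|\right)$. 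This in turn controls every weight $w_M = s|M \cap \mathcal{X}|/n$ and hence, via the $\varepsilon$-net theorem applied inside each $M$, every $|N_M| = O(w_M \log w_M)$.

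Next, I would sum $|N_M|$ over all $M \in \mathcal{M}_v$ and all internal nodes $v$ of $\mathcal{T}$. Using $|\mathcal{M}_v| = O(|\mathcal{P}_v|)$ from the construction, the $O(\log(1/\varepsilon))$ depth of $\mathcal{T}$, and the telescoping identity $\sum_{v \text{ at level } i}|\mathcal{X}_v| = n$, a careful accounting should yield $\mathbb{E}[|SN|] = O((1/\varepsilon)\log(1/\varepsilon))$ and hence $\mathbb{E}[|N|] = O((1/\varepsilon)\log(1/\varepsilon))$.

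To convert this absolute size bound into the claimed competitive ratio, I would use a charging scheme: each point of an offline optimal net $OPT$ is charged at most $O(\log(1/\varepsilon))$ points of $N$, with the $\log(1/\varepsilon)$ factor reflecting the depth of $\mathcal{T}$. The argument is that each $N_M$ is added only when some $\varepsilon$-heavy $\sigma$ is unhit by the current net; $OPT$ must contain a point $p \in \sigma$, and $p$ lies in at most one strip per level of $\mathcal{T}$, so it can be held accountable for at most $O(\log(1/\varepsilon))$ safety-net instantiations. The main obstacle I expect is precisely this last step: bounding the number of instantiated $N_M$'s relative to $|OPT|$, rather than in absolute terms. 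When $|OPT|$ is much smaller than $\Omega(1/\varepsilon)$, one must argue that the algorithm correspondingly triggers fewer safety-nets, which will require a careful coupling of the algorithm's trigger events with the hitting structure of any valid offline net, together with ensuring that the high-probability sample events hold simultaneously across all strips of $\mathcal{T}$.
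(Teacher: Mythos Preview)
Your size analysis of $\mathbb{E}[|N|]$ is in the right spirit (and is in fact close to the Aronov--Ezra--Sharir offline argument), but your plan for converting the absolute size bound into a competitive ratio diverges from the paper and contains a genuine gap.

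The paper does \textbf{not} use any charging against $OPT$. Instead, it bounds $\mathbb{E}[|N|]$ in absolute terms---obtaining $\mathbb{E}[|N|] = O\!\left(\tfrac{1}{\varepsilon}\log\tfrac{1}{\varepsilon}\,\log\log\tfrac{1}{\varepsilon}\right)$ via the crude estimate $w_M = O(s) = O(1/\varepsilon)$ and $|\mathcal{M}| = O\!\left(|\mathcal{P}|/\varepsilon\right) = O(\log\log\tfrac{1}{\varepsilon})$---and then simply invokes the Pach--Tardos lower bound, which says that \emph{any} $\varepsilon$-net for axis-aligned rectangles has size $\Omega\!\left(\tfrac{1}{\varepsilon}\log\log\tfrac{1}{\varepsilon}\right)$. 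Dividing the two gives $O(\log\tfrac{1}{\varepsilon})$. The paper explicitly notes that this yields a non-instance-optimal guarantee: the ratio is taken against the best possible $\varepsilon$-net size over all instances, not against the optimum for the particular input sequence.

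Your charging scheme, by contrast, aims for instance-optimality, and the obstacle you flag is real and not easily overcome: a single safety-net $N_M$ already has $O(w_M\log w_M)$ points, which can be as large as $O((1/\varepsilon)\log(1/\varepsilon))$, so charging one instantiation of $N_M$ to a single point of $OPT$ would blow the $O(\log(1/\varepsilon))$-per-point budget by a factor of roughly $1/\varepsilon$. Limiting the \emph{number} of safety-net instantiations to $O(|OPT|\cdot\log(1/\varepsilon))$ does not help unless you can also bound the \emph{size} of each instantiated $N_M$ by a constant, which the construction does not give. The paper sidesteps all of this by appealing to Pach--Tardos; you should do the same. (Also note your absolute bound $O((1/\varepsilon)\log(1/\varepsilon))$ is missing the $\log\log(1/\varepsilon)$ factor that the paper carries; the paper's bound on $\mathbb{E}[|\mathcal{M}|]$ comes from the specific sample size $|\mathcal{P}| = O(\varepsilon\log\log(1/\varepsilon))$, not from a concentration argument.)
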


\begin{proof}
Now, we will show that the expected competitive ratio of the algorithm is $O\left(\log \left(\frac{1}{\varepsilon}\right)\right)$.
Let $\cal I$ be the collection of all input rectangles arrived one by one to the algorithm.
First, we will compute how many points are placed by our algorithm for the input sequence $\cal I$. 
Let $N$ and $\OO$ be the $epsilon$-net constructed by our online algorithm and best offline optimum for input sequence $\cal I$.
Recall that the net $N$ constructed by our algorithm is the union of ${\cal P}$ and $w_M$-net for all $M\in {\cal M}$, where $\cal M$ is the collection of all ${\cal P}$-unit open rectangles. Now, we consider
\begin{align*}
    \mathbb{E}[|N|]=& \mathbb{E}[|{\cal P}|+\sum_{v \in \mathcal{T}} \sum_{M \in \mathcal{M}_v}(w_M\log w_M)]\leq \mathbb{E}[|{\cal P}'|]+\mathbb{E}[|{\cal M}|(w_M\log w_M)]\\ 
    = & \mathbb{E}[|{\cal P}'|]+ (w_M \log w_M)\mathbb{E}[|{\cal M}|]
    \leq   O\left(\left(\frac{\mathbb{E}[|\mathcal P|]}{\varepsilon} \right) w_M \log w_M\right)\tag{Since, $\mathbb{E}[|{\cal M}|]$ dominates over $\mathbb{E}[|{\cal P}'|]$}\\
    =&O\left(\log \log \left(\frac{1}{\varepsilon} \right)\right)\times (w_M\log w_M)\tag{Since $\mathbb{E}[|{\cal P}|]=O(\varepsilon \log\log\frac{1}{\varepsilon})$ and $\mathbb{E}[|{\cal M}|]=O\left(\log\log\frac{1}{\varepsilon}\right)$}\\
    =&O\left(\log \log \left(\frac{1}{\varepsilon} \right)\right)\times O\left(\left(\frac{1}{\varepsilon}\right)\log \left(\frac{1}{\varepsilon}\right)\right).\tag{Since, $w_M = O(s)$ and $s=\frac{2\delta}{\varepsilon}$}
\end{align*}

Due to Pach and Tardos~\cite{PachT10}, for $({\cal X},{\cal R})$, where ${\cal X}$ is a finite set of points, and ${\cal R}$ consists of axis-aligned rectangles, the size of the smallest $\varepsilon$-net (for $\varepsilon\in(0,1]$) is at least $\Omega (\frac{1}{\varepsilon}\log\log \frac{1}{\varepsilon})$. Thus, the offline optimal $\OO$ for any input sequence $\cal I$ will have size at least $O(\left(\frac{1}{\varepsilon}\log\log \frac{1}{\varepsilon}\right))$.
So, the expected competitive ratio of the algorithm will be 
    $\frac{\mathbb{E}[|N|]}{\OO}\le \frac{O\left(\log \log \left(\frac{1}{\varepsilon} \right)\times O\left(\left(\frac{1}{\varepsilon}\right) \log \left(\frac{1}{\varepsilon}\right) \right)\right)}{O\left(\left(\frac{1}{\varepsilon}\right) \log \log \left(\frac{1}{\varepsilon}\right)\right)} 
    = O\left(\log \left(\frac{1}{\varepsilon}\right)\right)$.
Hence, the theorem follows.
\end{proof}

\begin{comment}
  Since every axis-aligned square is also an axis-aligned rectangle, the above theorem leads to the following corollary for the online $\varepsilon$-net of axis-aligned squares.
\begin{corollary}
For the online $\varepsilon$-net problem with arbitrary axis-aligned squares, there exists an algorithm with an expected competitive ratio of at most $O\left(\log\left(\frac{1}{\varepsilon}\right)\right)$. Here, $\varepsilon \in \left(\frac{1}{C}, 1\right]$, where $C$ is a sufficiently large constant.
\end{corollary}
  
\end{comment}

Since we are using the \cite{PachT10} result as a lower bound, the claimed upper bounds are not instance-optimal. Achieving instance-optimal bounds would require an online lower bound for these objects. To the best of our knowledge, such a result has not yet been established in the literature, making it an intriguing open problem.

\subsubsection{Extension to Higher Dimensions.}

Our approach can be extended from $\mathbb{R}^2$ to $\mathbb{R}^3$.  We begin by selecting a random sample $\mathcal{P} \subseteq \mathcal{X}$ in $\mathbb{R}^3$ of size $O\left(\varepsilon \log \log \frac{1}{\varepsilon}\right)$, similar to the case in $\mathbb{R}^2$. Then, we construct a three-level range tree $\mathcal{T}$ over the points of $\mathcal{X}$ using standard methods from Computational Geometry (see \cite{10.5555/1370949}). This three-level range tree will help in construction of at most $ O\left(|\mathcal{P}| \cdot \frac{1}{\varepsilon} \log^2 \frac{1}{\varepsilon}\right)$ many safety-nets. 

These orthogonal planes define octants $s_{u,v,w}$ for each node $w$ of the tertiary tree, analogous to the strips in the $\mathbb{R}^2$. For each octant, we construct a set $\mathcal{M}_{u,v,w}$ of maximal $\mathcal{P}$-unhit boxes. Each box $M$ requires at most three points from $\mathcal{P}_{u,v,w}$ to define its boundaries on each distinct facets. Similar to the case in $\mathbb{R}^2$, the number of maximal boxes $|\mathcal{M}_{u,v,w}|$ is at most $|\mathcal{P}_{u,v,w}| + 1 = O(|\mathcal{P}|)$. Therefore, the total number of maximal boxes across all octants is not more than $O\left(\frac{1}{\varepsilon} \log^2 \frac{1}{\varepsilon} \cdot |\mathcal{P}|\right)$.

To handle an input box $\sigma$ that is $\varepsilon$-heavy but not hit by $\mathcal{P}$, we follow a similar procedure like the $\mathbb{R}^2$: Identify the lowest-level plane intersecting $\sigma$, extend $\sigma$ to the boundary of the octant or until it intersects a point from $\mathcal{P}_{u,v,w}$, and form a box that belongs to the set $\mathcal{M}_{u,v,w}$.
Finally, we define weights $w_M$ for each $M \in \mathcal{M}_{u,v,w}$ and construct safety nets $N_M$ as in the planar case. The final $\varepsilon$-net in $\mathbb{R}^3$ is the union of $\mathcal{P}$ with all the safety nets $N_M$.
Using the similar reasoning described in Section~\ref{sect_rectangles}, it is possible to show that the constructed set $ N $ is indeed an $\varepsilon$-net, where $\varepsilon \in [1/C, 1)$ for any sufficiently large constant $ C > 1 $.

\begin{theorem}\label{thm_boxes_3d}
For the online $\varepsilon$-net problem with arbitrary axis-aligned boxes in $\IR^3$, there exists an algorithm with an expected competitive ratio of at most $O\left(\log^3\left(\frac{1}{\varepsilon}\right)\right)$. Here, $\varepsilon \in \left(\frac{1}{C}, 1\right]$, where $C$ is a sufficiently large constant.
\end{theorem}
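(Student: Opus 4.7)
The plan is to mirror the planar construction from Section~\ref{sect_rectangles} step by step, with the combinatorial parameters inflated to account for the third dimension, and then to assemble the expected net size and compare it against a Pach--Tardos-style lower bound. First I would fix the random sample $\mathcal{P}\subseteq\mathcal{X}$ with the same sampling probability as in the planar case, so that $\mathbb{E}[|\mathcal{P}|]=O(\varepsilon\log\log(1/\varepsilon))$, and build the three-level range tree $\mathcal{T}$ truncated so that every canonical node at every level carries at most $\varepsilon n$ points; summing canonical subset sizes level by level shows that the induced family of canonical octants has size $O((1/\varepsilon)\log^2(1/\varepsilon))$. For each octant $s_{u,v,w}$ I would build $\mathcal{M}_{u,v,w}$ exactly as in the planar case, now with three of the six faces of every maximal $\mathcal{P}_{u,v,w}$-unhit box supplied by the octant boundary and each of the remaining three faces either supported by a single point of $\mathcal{P}_{u,v,w}$, extended to infinity, or extended to the opposite octant wall; the same neighbour-counting argument as in the planar proof yields $|\mathcal{M}_{u,v,w}|=O(|\mathcal{P}_{u,v,w}|)$, and summing gives $|\mathcal{M}|=O(|\mathcal{P}|(1/\varepsilon)\log^2(1/\varepsilon))$.

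Next, I would assign each $M\in\mathcal{M}$ the weight $w_M=s|M\cap\mathcal{X}|/n$ with $s=2\delta/\varepsilon$ and attach a $(1/w_M)$-net $N_M$ of size $O(w_M\log w_M)=O((1/\varepsilon)\log(1/\varepsilon))$ via the $\varepsilon$-net theorem. Correctness would then transfer verbatim from the planar argument: if an $\varepsilon$-heavy box $\sigma$ misses $\mathcal{P}$, descend through the three levels of $\mathcal{T}$, at each level locating the highest node whose separating plane crosses $\sigma$ and retaining the denser sub-box; after three halvings this extracts $\sigma'\subseteq\sigma$ with $|\sigma'\cap\mathcal{X}|\ge\varepsilon n/8$, and extending $\sigma'$ inside its octant produces some $M\in\mathcal{M}_{u,v,w}$ with $\sigma'\subseteq M$. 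The inequality $|\sigma'\cap\mathcal{X}|/|M\cap\mathcal{X}|\ge 1/w_M$ then guarantees that $\sigma'$, and therefore $\sigma$, is hit by $N_M$.

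The size calculation mimics the planar one:
\begin{align*}
\mathbb{E}[|N|] &\le \mathbb{E}[|\mathcal{P}|]+\mathbb{E}[|\mathcal{M}|]\cdot O(w_M\log w_M)\\
 &= O\!\left(\log^2\!\tfrac{1}{\varepsilon}\,\log\log\tfrac{1}{\varepsilon}\right)\cdot O\!\left(\tfrac{1}{\varepsilon}\log\tfrac{1}{\varepsilon}\right)\\
 &= O\!\left(\tfrac{1}{\varepsilon}\log^3\!\tfrac{1}{\varepsilon}\,\log\log\tfrac{1}{\varepsilon}\right),
\end{align*}
obtained by plugging $\mathbb{E}[|\mathcal{P}|]=O(\varepsilon\log\log(1/\varepsilon))$ into the bound $|\mathcal{M}|=O(|\mathcal{P}|(1/\varepsilon)\log^2(1/\varepsilon))$ and using $w_M=O(1/\varepsilon)$. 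The Pach--Tardos lower bound $\Omega((1/\varepsilon)\log\log(1/\varepsilon))$ for axis-aligned rectangles in the plane lifts to axis-aligned boxes in $\mathbb{R}^3$ by embedding into a coordinate plane, so dividing yields the claimed competitive ratio of $O(\log^3(1/\varepsilon))$.

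The step I expect to be the main obstacle is the geometric bound $|\mathcal{M}_{u,v,w}|=O(|\mathcal{P}_{u,v,w}|)$. In a planar strip there is only one free axis, which makes the top/bottom neighbour count transparent, whereas an octant of $\mathbb{R}^3$ leaves two free axes inside it, and in general the number of maximal empty axis-aligned boxes among $m$ points in a two-dimensional free region can blow up superlinearly. The saving feature is that three of the six faces of every $M$ are already pinned by the tree-induced octant, so each remaining face must be chosen as the coordinate-extremal point from $\mathcal{P}_{u,v,w}$ in one of a constant number of shadow regions; making this neighbour-selection argument airtight, covering the missing-face cases and the extensions to infinity or to the opposite octant wall, is the single place where the planar construction demands genuine adaptation rather than a direct copy.
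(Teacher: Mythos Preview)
Your proposal is correct and follows essentially the same route as the paper: same three-level range tree, same octants $s_{u,v,w}$, same family $\mathcal{M}_{u,v,w}$ with weights $w_M=s|M\cap\mathcal{X}|/n$, the identical expected-size computation, and the same division by the Pach--Tardos lower bound lifted to $\mathbb{R}^3$. The one point you flag as delicate---that $|\mathcal{M}_{u,v,w}|$ is linear in $|\mathcal{P}_{u,v,w}|$---is exactly the point the paper isolates as well; it dispatches it by invoking the known $\Theta(k^{\lfloor d/2\rfloor})$ bound on maximal empty orthants (which is linear for $d=3$ and is precisely why the construction stops working at $d\ge 4$), rather than by an ad hoc neighbour-selection argument.
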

\begin{proof}
    The expected size of $ N $ is given as,
$\mathbb{E}[|N|] = \mathbb{E}\left[|\mathcal{P}'| + \sum_{u \in \mathcal{T}} \sum_{v \in \mathcal{T}_v} \sum_{w \in \mathcal{T}_{u,v}} \sum_{M \in \mathcal{M}_{u,v,w}} w_M \log w_M\right]$, which can be bounded as $\mathbb{E}[|\mathcal{P}'|] + O\left(|\mathcal{P}| \cdot \frac{1}{\varepsilon} \log^2 \frac{1}{\varepsilon}\right) w_M \log w_M$.
Finally, applying the lower bound results from \cite{PachT10}, one can establish that the competitive ratio for the $\mathbb{R}^3$ case is:

\[
\frac{\mathbb{E}[|N|]}{|\mathcal{OPT}|} \leq \frac{O\left(\log^2 \frac{1}{\varepsilon} \cdot \log \log \frac{1}{\varepsilon} \times O\left(\frac{1}{\varepsilon} \log \frac{1}{\varepsilon}\right)\right)}{O\left(\frac{1}{\varepsilon} \log \log \frac{1}{\varepsilon}\right)} = O\left(\log^3 \frac{1}{\varepsilon}\right)
\]
This shows that the competitive ratio in $\IR^3$  is $ O\left(\log^3 \frac{1}{\varepsilon} \right) $. 
\end{proof}

\begin{remark}
For dimensions $ d\geq 4 $, the number of maximal $\mathcal{P}$-unhit open orthants within each octant containing $ k $ points from $ \mathcal{P} $ may no longer be linear in $ k $. In fact, it can grow as $ \Theta(k^{\lfloor d/2 \rfloor}) $, which is at least quadratic for $ d \geq 4 $ (see \cite{doi:10.1137/070684483}). This contrasts with instances in $\mathbb{R}^2$ or $\mathbb{R}^3$, where the number of such maximal unhit boxes is linear in $ k $, allowing us to efficiently bound the net size, which results in a small net size and a favorable competitive ratio.
However, due to the potentially non-linear growth in higher dimensions, it is unclear whether the tree construction algorithm used to find a small $\varepsilon$-net will yield similarly efficient results for $ d \geq 4 $.
\end{remark}

\begin{comment}
    \textcolor{red}{TO DO.}

\begin{itemize}
    \item Add a technical paragraph on how to extend everything to 3D. 
    \item Add a formal Corollary. 
    \item Add a formal remark about why we can't extend to a dimension higher than 3.
\end{itemize}
\end{comment}

\section{Online Piercing Set Problem}\label{sect_piercing}

In this section, we study the \emph{online piercing set} problem for various families of geometric objects. 
In Section~\ref{sect_rect} and \ref{sect_ellipse}, we analyze the performance of a simple deterministic algorithm $\textsc{Algo-Center}$ for piercing axis-aligned boxes and ellipsoids in $\IR^d$, respectively, which gives the desired competitive ratios. Then, in Section~\ref{sect_fat}, we analyse the performance of a deterministic algorithm $\AF$ for piercing fat objects in $\IR^d$.

In what follows, we first describe a simple deterministic algorithm.\\

\noindent
\textbf{Online algorithm: \textsc{Algo-Center}.} 
\textit{Let $\A$ be the piercing set maintained by our algorithm to pierce the incoming object. Initially, $\A=\emptyset$. Our algorithm does the following on receiving a new input object $\sigma$. If the existing piercing set pierces $\sigma$, do nothing. Otherwise, our online algorithm adds the center of $\sigma$ to $\A$.}

 The analysis of the algorithm is similar in nature for fat objects, axis-aligned boxes and ellipsoids in $\IR^d$. To {bound} the competitive ratio, we determine the number of  points placed by algorithm against each point $p$ in an offline optimum. To compute the number of piercing points placed by our algorithm, we consider the region containing all objects that can be pierced by the point $p$. We have partitioned this region into $O(\log M)$ regions such that our algorithm places the same number of piercing points in each of these regions.  Finally, we give an upper bound on the total number of points placed by our algorithm in each of these regions. The competitive ratio is $O(\log M)$  multiplied by this number.

\subsection{Online Piercing Set for Axis-aligned Boxes in $\IR^d$}\label{sect_rect}

In this section, we study the piercing set problem for $({\cal X},{\cal R})$, where the set $\cal X$ is the entire $\IR^d$ and $\cal R$ is a family of axis-aligned arbitrary boxes from $[1,M]^d$ having side lengths in  $[1, M]$ such that the ratio between the largest and shortest side lengths is 2. This can be generalized to a ratio of $C$, where $C$ is a fixed constant.
Note that boxes can have arbitrary aspect ratios, thus they are not necessarily fat objects. Hence, the result for piercing $\alpha$-fat objects (\cite{DeMS24}) does not apply to boxes in $\mathbb{R}^d$. 
In fact, surprisingly, no online algorithm is known to date even for rectangles in $\mathbb{R}^2$. 
For a fixed $d\in\mathbb{Z}^{+}$, for piercing axis-aligned boxes in $\IR^d$, we propose a simple deterministic algorithm $\textsc{Algo-Center}$ which obtains a competitive ratio~$O(\log M))$ (Theorem~\ref{thm_piercing_boxes}). 
There exists a randomized lower bound for hypercubes in $\IR^d$ of $\Omega(\log M)$ (\cite{DeMS24}), which also holds for axis-aligned rectangles in $\IR^d$. Thus, the competitive ratio obtained by our algorithm is asymptotically tight.

In this section, we first present the analysis of $\textsc{Algo-Center}$ for rectangles in $\IR^2$. Later, we generalize it for higher dimensions (see~\ref{sect_gen_boxes}). Throughout the section, all distances are $L_{\infty}$ distances, and all boxes are axis-aligned, unless stated otherwise.

\begin{theorem}
    For piercing axis-aligned rectangles in $\IR^2$ having the length of each side in the range $[1,M)$, $\textsc{Algo-Center}$ has a competitive ratio of at most~$O(\log M)$.
\end{theorem}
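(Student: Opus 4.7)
The plan follows the per-OPT-point charging recipe announced at the start of the section. Fix an offline optimum piercing set $\OO$, and for each $p \in \OO$ let $\mathcal{C}(p)$ denote the set of centers that $\textsc{Algo-Center}$ inserted in response to an arriving rectangle containing $p$. Since $\OO$ pierces every arrival, every center added by the algorithm belongs to $\mathcal{C}(p)$ for at least one $p \in \OO$, so the competitive ratio is bounded by $\max_{p \in \OO} |\mathcal{C}(p)|$, and it is enough to show $|\mathcal{C}(p)| = O(\log M)$ for every $p$.

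Fix $p$ and split the plane into the four closed quadrants around $p$; by symmetry it suffices to treat the upper-right quadrant. For a rectangle $R_i$ of side lengths $(w_i, h_i)$ whose center $c_i = (c_{i,x}, c_{i,y})$ lies in this quadrant and contains $p = (p_x, p_y)$, define
\[
X_i := c_{i,x} + w_i/2 - p_x , \qquad Y_i := c_{i,y} + h_i/2 - p_y ,
\]
the distances from $p$ to the right and top edges of $R_i$. Since $p \in R_i$ forces $c_{i,x} - p_x \in [0, w_i/2]$, one has $X_i \in [w_i/2, w_i] \subseteq [1/2, M)$ and similarly $Y_i \in [1/2, M)$. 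The admission rule ``$c_i \notin R_j$ for every $i < j$ in $\mathcal{C}(p)$'' unpacks in this quadrant as $c_{i,x} - p_x > X_j$ or $c_{i,y} - p_y > Y_j$; combined with $c_{i,x} - p_x = X_i - w_i/2 \le X_i/2$ (which uses $w_i \ge X_i$), it yields the \emph{halving inequality}
\[
X_j < X_i/2 \quad \text{or} \quad Y_j < Y_i/2 \qquad \text{whenever } i < j .
\]

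I would finish the count by partitioning $[1/2, M)^2$ into $O(\log M)$ dyadic scale rings $S_k := \{(X,Y) : 2^{k-1} \le \max(X,Y) < 2^k\}$, $k = 0, 1, \ldots, \lceil \log_2 M \rceil$, placing every $(X_i, Y_i)$ in some $S_k$. Inside a single ring the larger of $X_i, Y_i$ is pinned to $[2^{k-1}, 2^k)$ and cannot itself be halved below $2^{k-1}$, so the halving inequality forces the \emph{smaller} coordinate of any newer ring-member to strictly halve; a short packing argument then caps the count per ring by a constant, and summing over the $O(\log M)$ rings and four quadrants yields $|\mathcal{C}(p)| = O(\log M)$.

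The main obstacle I anticipate is precisely the per-ring constant bound. The disjunction in the halving inequality allows an adversary to alternate which coordinate is shrunk, and a naive per-ring tally only yields $O(k)$ centers inside $S_k$, which sums to $O(\log^2 M)$ over all rings. Sharpening this to $O(\log M)$ requires amortizing halvings across rings --- charging each new insertion to whichever coordinate shrinks at that step, and exploiting that each of $X$ and $Y$ separately lives in $[1/2, M)$ and hence can undergo at most $O(\log M)$ halvings in total over the entire sequence. Making this amortization rigorous (for instance by maintaining a potential on the Pareto frontier of the inserted pairs $\{(X_i, Y_i)\}$ and bounding its total drop) is where the bulk of the technical work lies; once it is in place, the competitive ratio of $O(\log M)$ follows.
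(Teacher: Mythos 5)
Your halving inequality $X_j < X_i/2 \ \text{or}\ Y_j < Y_i/2$ (for $i<j$) is correctly derived, but the gap you flag at the end --- the per-ring constant bound --- is not a technicality you can amortize away: the halving inequality alone genuinely allows $\Theta(\log^2 M)$ pairs $(X_i,Y_i)$, and this worst case is realizable by the algorithm. Take $p=(0,0)$ and present the rectangles $R_{k,\ell}=[0,X_k]\times[0,Y_\ell]$ with $X_k=M/2.1^{\,k-1}$ and $Y_\ell=M/2.1^{\,\ell-1}$ for $k,\ell=1,\dots,K$ with $K=\Theta(\log M)$, in lexicographic order of $(k,\ell)$. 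Every $R_{k,\ell}$ contains $p$, every center is $c_{k,\ell}=(X_k/2,\,Y_\ell/2)$, and one checks directly that for any earlier $(k',\ell')$ either $X_{k'}/2> X_k$ (when $k'<k$) or $Y_{\ell'}/2>Y_\ell$ (when $k'=k,\ \ell'<\ell$), so no earlier center pierces $R_{k,\ell}$ and $\textsc{Algo-Center}$ is forced to insert all $K^2=\Theta(\log^2 M)$ centers while $\OO=\{p\}$. So the ``amortization across rings'' you hope for cannot exist under the hypotheses you are using.

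The missing ingredient is a constraint that is stated in the preamble of this section of the paper but not in the theorem's displayed text: each box's longest side is at most a constant factor (the paper uses~$2$) times its shortest side. Once you import this, your own machinery closes the gap immediately and no amortization is needed. Indeed $X_i\in[w_i/2,w_i]$ and $Y_i\in[h_i/2,h_i]$ together with $w_i/h_i\in[1/2,2]$ give $X_i/Y_i\in[1/4,4]$, so inside a dyadic ring $S_k=\{\,2^{k-1}\le\max(X,Y)<2^k\,\}$ both coordinates lie in $[2^{k-3},2^k)$. Bucketing $S_k$ by $(\lfloor\log_2 X\rfloor,\lfloor\log_2 Y\rfloor)$ yields $O(1)$ buckets, and your halving inequality forbids two indices in the same bucket (neither coordinate can drop by a factor $2$ inside a bucket), so $|\mathcal{C}(p)\cap S_k|=O(1)$ and summing over the $O(\log M)$ rings and four quadrants gives the theorem. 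This is essentially the same argument as the paper's, just carried out in $(X,Y)$-space instead of physical space: the paper places a square of side $M$ around $p$, decomposes it into $O(\log M)$ square annuli, covers each annulus by $O(1)$ cells of side $M/2^{i+1}$, and shows that two centers cannot share a cell --- and that last step is exactly where the paper, too, uses the bounded aspect ratio (a rectangle whose center sits in the $i$-th annulus must have \emph{both} sides of length $\Omega(M/2^i)$, which is false for long thin rectangles).
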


\begin{proof}
 Let  $\I$ be the set of rectangles presented to the online algorithm.  Let $\A$ and $\OO$ denote the piercing set returned by the online algorithm $\textsc{Algo-Center}$ and an offline optimal for $\I$.   
Consider a point $p \in \OO$ and let $\I^p\subseteq \I$ be the collection of all the rectangles arrived so far and contain the point $p$.
 Let $\A^{p}\subseteq \A$ be the set of piercing points placed by \textsc{Algo-Center} to pierce all the input rectangles in $\I^p$. 
 Clearly, we have $\A=\bigcup_{p\in \OO}\A^p$.
Consequently, the competitive ratio of our algorithm is upper bounded by  $\max_{p\in\OO}|\A^p|$. 

Now, consider any point $a\in \A^p$. Since $a$ is the center of a rectangle $\sigma\in \I^p$ that contains the point $p$ and has a side length of at most $M$, the distance between $a$ and $p$ is at most $\frac{M}{2}$.
 As a result, a square $S$ of side length $M$, centered at $p$, will contain all the points in $\A^p$.  
Next, partition the square $S$ into $(\lfloor\log M\rfloor+1)$ smaller nested squares. For $i\in [(\lfloor\log M\rfloor+1)]$, let $S_i$ be a square with sides of length  $\frac{M}{2^{i-1}}$. Define the \emph{annular region} $A_i=S_i\setminus S_{i+1}$, where $i\in [(\lfloor\log M\rfloor+1)]$. Notice that the annular region $A_i$ contains all the rectangles of $\I_p$ whose length of both the sides are at least $\frac{M}{2^{i-1}}$.
Let $\A^p_{i}=\A^p\cap A_{i}$ be the subset of $\A^p$ that is contained in the region $A_{i}$. 

\begin{lemma}\label{lm:Cardinality_A_i}
$|\A^p_{i}|\leq 12$.
\end{lemma}
\begin{proof} 
Since $A_{i}=S_{i}\setminus S_{i+1}$, the distance (under $L_{\infty}$ norm) from the center $p$ to the boundary of $S_{i}$ and   $S_{i+1}$ is $\frac{M}{2^{i-1}}$ and  $\frac{M}{2^{i}}$, respectively. Thus, the annular region $A_{i}$ can contain  squares of side length  $\frac{M}{2^{i+1}}$.
\begin{claim}
    The annular region $A_{i}$ is the union of at most $12$ disjoint squares, each having side length $\frac{M}{2^{i+1}}$.
\end{claim}
\begin{proof}
   To calculate the number of such squares $S$ of side length $\frac{M}{2^{i+1}}$ in the annular region $A_i$, we need to find the ratio of area of $A_i$ with respect to the area of square $S$. The area of $A_i$ is equals to the area of $S_i$ minus the area of $S_{i+1}$. Thus, the area of $A_i$ is $\frac{3M^2}{4}$. The area of $S$ is $\frac{M^2}{16}$. Thus the ratio will be 12. Hence, the claim follows.
\end{proof}
To complete the proof, next, we argue that our online algorithm places at the most one piercing point in each of these squares to pierce the objects in $\I^p$. 
Let $S$ be any such square of side length $\frac{M}{(2)^{i+1}}$, and let $q_1\in S$ be a piercing point placed by our online algorithm. For a contradiction, let us assume that our online algorithm places another piercing point $q_2\in H$, where $q_2$ is the center of an object $\sigma \in \I_p$.  Since $\sigma$ contains both the points $p$ and $q_2$, the distance (under $L_{\infty}$ norm) between them is at least $\frac{M}{2^{i}}$.
Note that the distance (under $L_{\infty}$ norm) between any two points in $S$ is at most $\frac{M}{2^{i+1}}$, as a result, $\sigma$ is already pierced by $q_1$, since $\sigma$ is a rectangle of side length at least $\frac{M}{2^i}$. This contradicts our algorithm.
Thus, the region $H$ contains at most one piercing point of $\A^{p}_{i}$. Hence, the lemma follows.
\end{proof}
Since $\bigcup \A^{p}_{i}=\A_p$ and due to Lemma~\ref{lm:Cardinality_A_i} we have $\A^{p}_{i}\leq 12$, therefore $|\A_p|\leq 12\times (\lfloor\log M\rfloor+1)=O(\log M)$. Hence, the theorem follows.
\end{proof}

\subsubsection{Generalization to Boxes in $\mathbb{R}^d$}\label{sect_gen_boxes}
Similar to $\mathbb{R}^2$, we have a hypercube $H$ of side length $M$ centered at $p\in\OO$, containing all the centers of the objects in $\I_p$.
We can partition the hypercube $H$ into $\lfloor\log M\rfloor+1$ smaller hypercubes $S_i$. Specifically, the hypercube $S_{i}$ has all sides is of length $\frac{M}{2^{{i}-1}}$. Similar to the two dimensional case, here also we define the annular region $A_i=S_i\setminus S_{i+1}$, where $i\in [(\lfloor\log M\rfloor+1)]$. Notice that the annular region $A_i$ contains all the boxes of $\I_p$ such that the length of all the sides are at least $\frac{M}{2^{i-1}}$.
Let $\A^p_{i}=\A^p\cap A_{i}$ be the subset of $\A^p$ that is contained in the region $A_{i}$. For each $i\in[(\lfloor\log M\rfloor+1)]$, we show that $|\A^p_{i}|\leq 2^d(2^d-1)=O(4^d)$ (due to Lemma~\ref{lm:Cardinality_higher}).
\begin{lemma}\label{lm:Cardinality_higher}
    $|\A^p_{i}|\leq 2^d(2^d-1)=O(4^d)$.
\end{lemma}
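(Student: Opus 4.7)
The plan is to mirror the two-step strategy used for the planar case. First I establish a combinatorial decomposition: the annular region $A_i = S_i \setminus S_{i+1}$ can be written as a disjoint union of exactly $2^d(2^d-1)$ axis-aligned hypercubes of common side length $\frac{M}{2^{i+1}}$. This follows by a tiling count: the outer hypercube $S_i$ of side $\frac{M}{2^{i-1}}$ tiles into $4^d$ small hypercubes of side $\frac{M}{2^{i+1}}$, and the inner hypercube $S_{i+1}$ of side $\frac{M}{2^i}$ tiles into $2^d$ of them, giving $4^d - 2^d$ small hypercubes in the annular complement.

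Second, I show that each such small hypercube $K$ contains at most one point of $\A^p_i$. Suppose for contradiction that $q_1, q_2 \in K \cap \A^p_i$, with $q_2$ placed by the algorithm after $q_1$. Then $q_2$ is the center of some $\sigma \in \I^p$ that was unpierced on arrival, and because $q_2 \in A_i$ the $L_\infty$ distance from $q_2$ to $p$ is at least $\frac{M}{2^i}$. Since $\sigma$ is centered at $q_2$ and contains $p$, its longest side has length at least $\frac{M}{2^{i-1}}$. Invoking the ratio-2 hypothesis on the input boxes upgrades this to: every side of $\sigma$ has length at least $\frac{M}{2^i}$, so the half-extent of $\sigma$ around $q_2$ is at least $\frac{M}{2^{i+1}}$ in every coordinate direction. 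Because $q_1, q_2 \in K$ and $K$ has side length $\frac{M}{2^{i+1}}$, we have $d_\infty(q_1, q_2) \leq \frac{M}{2^{i+1}}$, so $q_1 \in \sigma$. But then $\sigma$ was already pierced by $q_1$ when $q_2$ arrived, contradicting the rule of \textsc{Algo-Center}. Combining both pieces yields $|\A^p_i| \leq 2^d(2^d-1) = O(4^d)$.

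The step I expect to require the most care is the upgrade from ``longest side large'' to ``every side large'' via the ratio-2 hypothesis, since this is exactly where the boxes' anisotropy is tamed and is what lets a single small hypercube $K$ sit inside every candidate $\sigma$. The tiling count and the $L_\infty$ bookkeeping are a direct lifting of the planar argument, with $4^d$ and $2^d$ replacing $16$ and $4$ respectively, and no new geometric ideas are required.
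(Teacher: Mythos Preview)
Your proposal is correct and mirrors the paper's proof essentially step for step: decompose the annulus $A_i$ into $2^d(2^d-1)$ axis-aligned hypercubes of side $\frac{M}{2^{i+1}}$ (the paper reaches this count via a volume ratio, you via a direct tiling of $S_i$ into $4^d$ small cubes of which $2^d$ form $S_{i+1}$), and then argue by contradiction that each small cube receives at most one piercing point. Your explicit appeal to the ratio-$2$ hypothesis to upgrade ``longest side at least $\frac{M}{2^{i-1}}$'' to ``every side at least $\frac{M}{2^{i}}$'' makes precise a step the paper's own proof leaves implicit, and your $L_\infty$ bookkeeping matches the paper's line by line.
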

\begin{proof}
Since $A_{i}=S_{i}\setminus S_{i+1}$, the distance (under $L_{\infty}$ norm) from the center $p$ to the boundary of $S_{i}$ and   $S_{i+1}$ is $\frac{M}{2^{i-1}}$ and  $\frac{M}{2^{i}}$, respectively. Thus, the annular region $A_{i}$ can contain  squares of side length  $\frac{M}{2^{i+1}}$.
\begin{claim}
    The annular region $A_{i}$ is the union of at most $12$ disjoint squares, each having side length $\frac{M}{2^{i+1}}$.
\end{claim}
\begin{proof}
 To calculate the number of such hypercubes $H$ of side length $\frac{M}{2^{i+1}}$ in the annular region $A_i$, we need to find the ratio of the volume of $A_i$ with respect to the volume of hypercube $S$. The area of $A_i$ is equals to the volume of $S_i$ minus the volume of $S_{i+1}$. Thus, the area of $A_i$ is $\frac{(2^d-1)M^d}{2^d}$. The area of $S$ is $\left(\frac{M}{4}\right)^d$. Thus the ratio will be $2^d(2^d-1)$. Hence, the claim follows.    
\end{proof}
To complete the proof, next, we argue that our online algorithm places at the most one piercing point in each of these squares to pierce the objects in $\I^p$. 
Let $S$ be any such square of side length $\frac{M}{(2)^{i+1}}$, and let $q_1\in S$ be a piercing point placed by our online algorithm. For a contradiction, let us assume that our online algorithm places another piercing point $q_2\in H$, where $q_2$ is the center of an object $\sigma \in \I_p$.  Since $\sigma$ contains both the points $p$ and $q_2$, the distance (under $L_{\infty}$ norm) between them is at least $\frac{M}{2^{i}}$.
Note that the distance (under $L_{\infty}$ norm) 
 between any two points in $S$ is at most $\frac{M}{2^{i+1}}$, as a result, $\sigma$ is already pierced by $q_1$, since $\sigma$ is a rectangle of side length at least $\frac{M}{2^i}$. This contradicts our algorithm.
Thus, the region $H$ contains at most one piercing point of $\A^{p}_{i}$. Hence, the lemma follows.
\end{proof}

Since $\cup \A^{p}_{i}=\A_p$. Thus, we have the following theorem.
\begin{theorem}\label{thm_piercing_boxes}
    For a fixed $d\in\mathbb{Z}^{+}$, for piercing arbitrary box in $\IR^d$ having the length of each side in $[1,M)$, $\textsc{Algo-Center}$ has a competitive ratio of at most~$O(\log M)$.
\end{theorem}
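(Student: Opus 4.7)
The plan is to mimic the $\mathbb{R}^2$ proof directly, since the required combinatorial lemma in dimension $d$ (Lemma \ref{lm:Cardinality_higher}) is already in hand. I would first fix $p \in \OO$ and let $\I^p \subseteq \I$ denote the boxes in the input stream that contain $p$, and let $\A^p \subseteq \A$ be the piercing points the algorithm places while processing boxes in $\I^p$. Since $\A = \bigcup_{p \in \OO} \A^p$, bounding the competitive ratio reduces to upper bounding $\max_{p \in \OO} |\A^p|$.

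Next I would localize $\A^p$ geometrically. Every point $a \in \A^p$ is the center of some box $\sigma \in \I^p$ of side length at most $M$ that contains $p$, so $\|a - p\|_\infty \leq M/2$. Hence $\A^p$ is contained in a hypercube $H$ of side length $M$ centered at $p$. I would then partition $H$ into the nested sequence of hypercubes $S_1 \supset S_2 \supset \cdots \supset S_{\lfloor \log M \rfloor + 1}$, where $S_i$ has side length $M/2^{i-1}$, together with the annular regions $A_i = S_i \setminus S_{i+1}$ for $i \in \{1, \ldots, \lfloor \log M \rfloor + 1\}$. A box of $\I^p$ whose center lies in $A_i$ must have each side of length at least $M/2^{i-1}$, because it contains both $p$ and a point at $L_\infty$-distance at least $M/2^i$ from $p$. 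This is precisely the setup in which Lemma \ref{lm:Cardinality_higher} applies, giving $|\A^p_i| = |\A^p \cap A_i| \leq 2^d(2^d - 1) = O(4^d)$.

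Summing over the $\lfloor \log M \rfloor + 1$ annular regions yields
\[
|\A^p| \;=\; \sum_{i=1}^{\lfloor \log M \rfloor + 1} |\A^p_i| \;\leq\; 2^d(2^d - 1) \, (\lfloor \log M \rfloor + 1) \;=\; O(\log M),
\]
where the hidden constant depends only on $d$. Taking the maximum over $p \in \OO$ and applying $|\A| \leq |\OO| \cdot \max_{p \in \OO} |\A^p|$ shows that $\textsc{Algo-Center}$ produces a piercing set of size at most $O(\log M) \cdot |\OO|$, establishing the claimed competitive ratio.

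The proof is essentially a bookkeeping combination of two facts already proved: that centers must lie in the side-$M$ hypercube around $p$, and the geometric packing statement of Lemma \ref{lm:Cardinality_higher}. There is no real obstacle here; the only subtle check is that the implication ``center in $A_i$ $\Rightarrow$ each side length $\geq M/2^{i-1}$'' follows from the containment of $p$ inside the box, which is what allows the small sub-hypercubes of side $M/2^{i+1}$ covering $A_i$ to each contain at most one piercing point under $L_\infty$ separation. With that in place, the summation over levels finishes the argument.
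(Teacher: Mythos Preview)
Your proposal is correct and follows essentially the same route as the paper: fix $p\in\OO$, localize $\A^p$ inside the side-$M$ hypercube around $p$, slice into $\lfloor\log M\rfloor+1$ dyadic annular shells, invoke Lemma~\ref{lm:Cardinality_higher} on each shell, and sum. The only nit is your side-length claim ``each side $\geq M/2^{i-1}$'': from the center lying outside $S_{i+1}$ you only get one coordinate with $|c_k-p_k|>M/2^{i+1}$, hence one side $>M/2^{i}$, and you need the paper's standing aspect-ratio-$2$ assumption to propagate this to all sides (up to a factor of $2$); this is harmless for the $O(\log M)$ conclusion and mirrors the paper's own level of precision.
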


\subsection{Online Piercing Set for Ellipsoid in $\IR^d$}\label{sect_ellipse}

In this section, we study the piercing set problem for a family of ellipsoids having length of axis-aligned semi-principle axes in $[1, M]$ such that the ratio between the largest and shortest semi-principle axes is 2. This can be generalized to a ratio of $C$, where $C$ is a fixed constant.
Similar to rectangles, ellipses can also have arbitrary aspect ratios and are not assumed to be fat. Surprisingly, no online algorithm is known to date, even for ellipses in $\mathbb{R}^2$. In this section, for a fixed $d\in\mathbb{Z}^{+}$, we show that for piercing ellipsoids in $\IR^d$,. $\textsc{Algo-Center}$ achieves a competitive ratio of at most~$O(\log M)$ (Theorem~\ref{thm_gen_ell}). 
The competitive ratio obtained by our algorithm is asymptotically tight, since the lower bound  for ball is $\Omega(\log M)$~\cite{DeJKS24}.

Here, we first present the analysis of the algorithm for the case of ellipses in $\IR^2$. Later, in Section~\ref{sect_gen_ellip}, we generalize the analysis of the algorithm for the higher dimensional ellipsoids (see~\ref{sect_gen_ellip}). The proof of the following theorem is similar to the proof of Theorem~\ref{thm_piercing_boxes}.
\begin{figure}[htbp]
  \centering
    \begin{subfigure}[b]{0.45\textwidth}
          \centering
        \includegraphics[width=30mm]{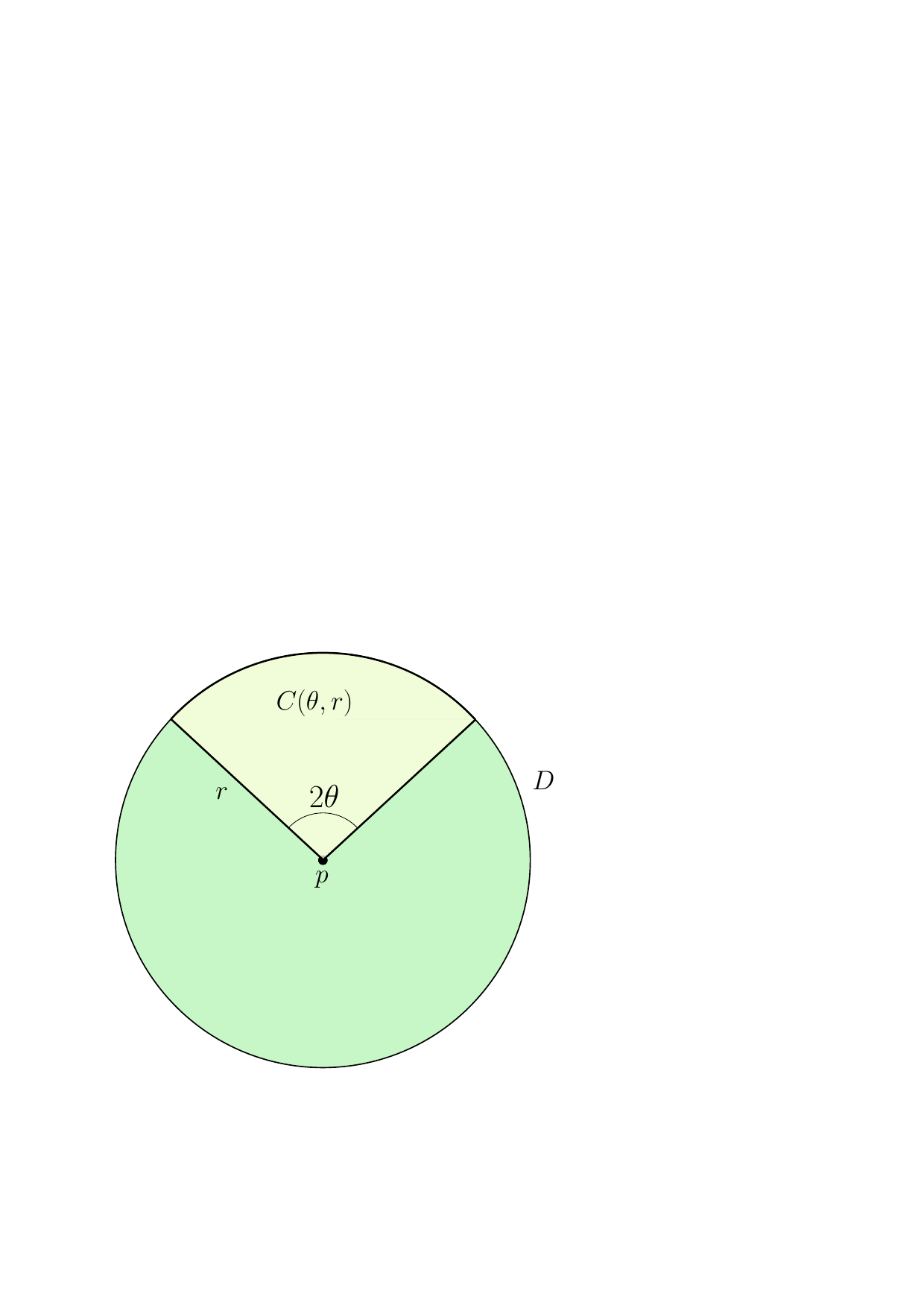}
    \caption{}
    \label{fig:ann_crea}
     \end{subfigure}
     \hfill
      \begin{subfigure}[b]{0.45\textwidth}
          \centering
    \includegraphics[width=30mm]{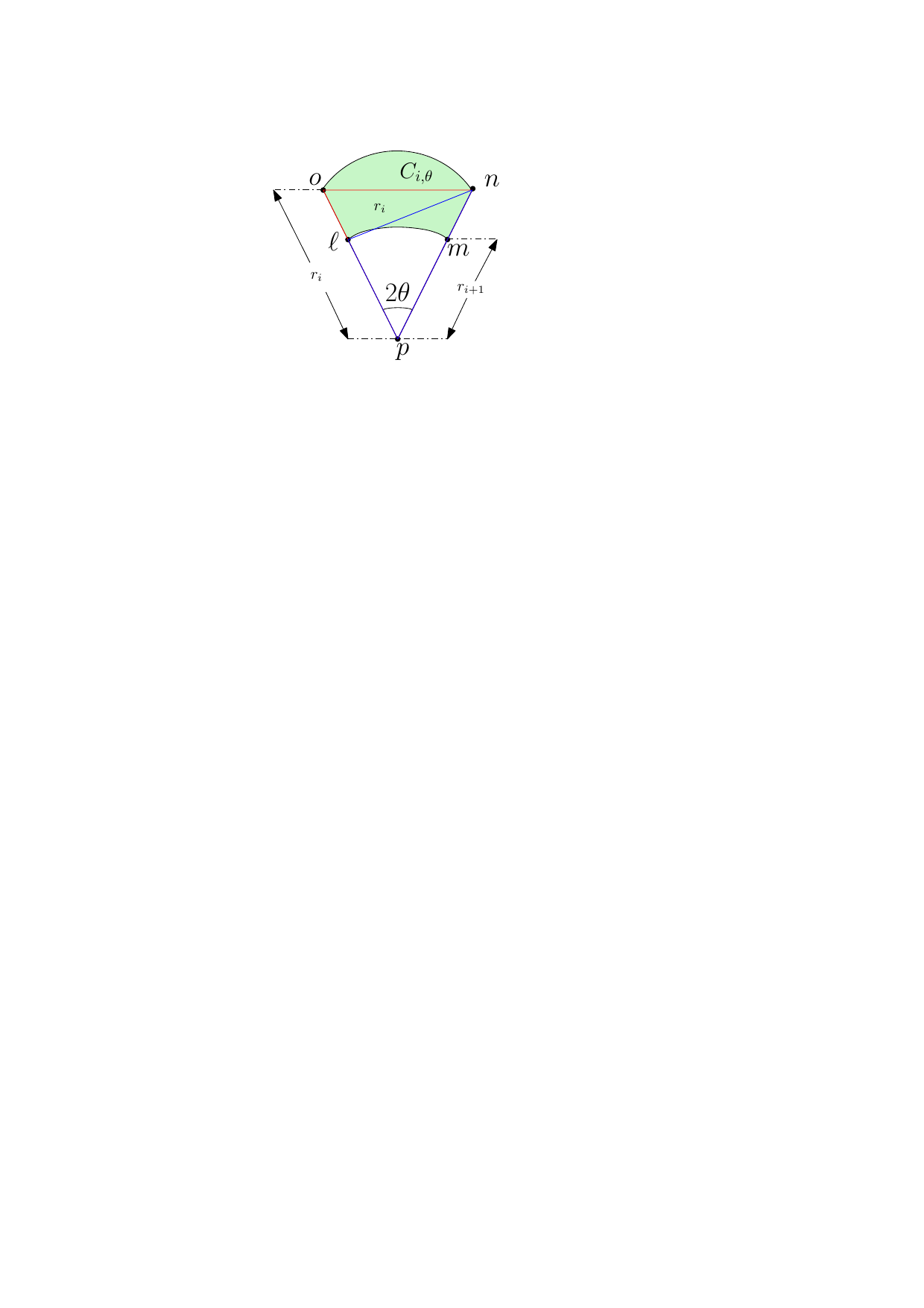}
    \caption{ }
    \label{fig:cone}
     \end{subfigure}
 \caption{(a) Partitioning the disk $D$ of radius $r$ using circular sector $C(\theta,r)$; (b) Description of circular sector $C(\theta,r_{i})$ and circular block $C_{i,\theta}$.}
\end{figure}
\begin{theorem}
\label{thm:alpha-ball}
For piercing ellipses in $\IR^2$ having length of axis aligned semi-major and semi-minor axis in the range $[1,M]$, $\textsc{Algo-Center}$ achieves a
competitive ratio of at most~$O(\log M)$.
\end{theorem}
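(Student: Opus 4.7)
The plan is to mirror the analysis of Theorem~\ref{thm_piercing_boxes} for boxes, substituting the $L_\infty$ geometry by Euclidean geometry suited to ellipses. Fix $p\in\OO$, let $\I^p\subseteq\I$ be the ellipses in the input that contain $p$, and let $\A^p\subseteq\A$ denote the points placed by $\textsc{Algo-Center}$ while processing those ellipses. Since $\A=\bigcup_{p\in\OO}\A^p$, the competitive ratio is bounded by $\max_{p\in\OO}|\A^p|$, so it suffices to show $|\A^p|=O(\log M)$.

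Since every ellipse in $\I^p$ contains $p$ and has semi-principal axes at most $M$, its center lies within Euclidean distance $M$ from $p$; thus $\A^p\subseteq D$, the disk of radius $M$ centered at $p$. Partition $D$ into $\lceil\log M\rceil+1$ concentric annuli $A_i=D_i\setminus D_{i+1}$, where $D_i$ is the disk of radius $M/2^{i-1}$ around $p$. For any $c\in\A^p\cap A_i$, one has $\|c-p\|_2>M/2^i$, hence the corresponding ellipse has semi-major axis $>M/2^i$, and by the bounded aspect-ratio assumption (ratio at most $2$) its semi-minor axis exceeds $M/2^{i+1}$.

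The key step is to show $|\A^p\cap A_i|=O(1)$. Suppose $\sigma_1,\sigma_2\in\I^p$ have centers $c_1,c_2\in A_i$ with $\sigma_1$ arriving first; then \textsc{Algo-Center} places $c_2$ only if $c_1\notin\sigma_2$. Writing $a_2\ge b_2$ for the semi-axes of $\sigma_2$, whenever $\|c_1-c_2\|_2\le b_2$ one directly computes
\[
\frac{(c_{1x}-c_{2x})^2}{a_2^2}+\frac{(c_{1y}-c_{2y})^2}{b_2^2}\ \le\ \frac{\|c_1-c_2\|_2^2}{b_2^2}\ \le\ 1,
\]
so $c_1\in\sigma_2$, a contradiction. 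Hence any two points of $\A^p\cap A_i$ are at Euclidean distance strictly greater than $M/2^{i+1}$. Partitioning $A_i$ into a constant number of circular blocks $C_{i,\theta}$ as depicted in Figure~\ref{fig:cone} (or, equivalently, a disk-area packing argument: $A_i$ fits in a disk of radius $M/2^{i-1}$ while each placed center reserves an open disk of radius $M/2^{i+2}$) yields $|\A^p\cap A_i|=O(1)$. Summing over the $O(\log M)$ annuli gives $|\A^p|=O(\log M)$.

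The main technical step is the pairwise-distance lemma above: it is what converts the geometric non-piercing condition $c_1\notin\sigma_2$ into a clean Euclidean bound on $\|c_1-c_2\|_2$, and it uses both the axis-alignment (so the ellipse inequality has no rotation-induced cross term) and the bounded aspect ratio (to make the semi-minor axis large enough to swamp the annulus scale). Without axis-alignment one would get a direction-dependent obstruction, and without the aspect-ratio bound arbitrarily thin ellipses through $p$ could cluster centers in one coordinate direction without either containing the other — in either case a finer, orientation-sensitive partition would be required.
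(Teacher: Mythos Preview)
Your proof is correct and follows the same high-level skeleton as the paper—charge $\A^p$ to $O(\log M)$ concentric annuli around $p$ and show each annulus carries $O(1)$ centers—but the key technical step is genuinely different and more transparent. The paper further decomposes each annulus into circular sectors of a specially calibrated half-angle $\theta=\tfrac{1}{2}\cos^{-1}\bigl(\tfrac{1}{2}+\tfrac{1}{1+\sqrt{5}}\bigr)$ and uses radii in geometric progression with ratio $1+x=\tfrac{1+\sqrt{5}}{2}$, these constants being chosen so that the diameter of each circular block $C_{i,\theta}$ equals exactly the inner radius; it then argues that a second center in the same block would already be pierced. You sidestep this calibration entirely: the single inequality $\frac{(c_{1x}-c_{2x})^2}{a_2^2}+\frac{(c_{1y}-c_{2y})^2}{b_2^2}\le\frac{\|c_1-c_2\|^2}{b_2^2}$ converts the non-piercing condition $c_1\notin\sigma_2$ directly into a Euclidean separation $\|c_1-c_2\|>b_2>M/2^{i+1}$, after which any crude area-packing bound finishes. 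This makes the role of both hypotheses—axis-alignment (no cross term in the ellipse inequality) and bounded aspect ratio (semi-minor comparable to semi-major)—completely explicit, exactly as your closing paragraph observes; in the paper's version these dependencies are folded into the sector geometry. One small bookkeeping point: your annuli $A_1,\dots,A_{\lceil\log M\rceil+1}$ do not cover the innermost disk $D_{\lceil\log M\rceil+2}$, so centers with $\|c-p\|$ very small are unaccounted for; but since every semi-minor axis is at least $1$, your same inequality yields pairwise distance $>1$ there, and a disk of radius at most $1$ holds $O(1)$ such points—one extra sentence closes this.
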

\begin{proof}
 Let  $\I$ be the set of input ellipses in $\IR^2$ presented to the algorithm.  Let $\A$ and $\OO$ be two piercing sets for $\I$ returned by $\textsc{Algo-Center}$ and the offline optimal, respectively.  
Let $p$ be any piercing point of the offline optimal $\OO$. Let $\I_p\subseteq \I$ be the set of input ellipses  pierced by the point $p$. Let $\A_p$ be the set of piercing points placed by our algorithm to pierce all the ellipses in $\I_p$. To prove the theorem,  we will give an upper bound of $|\A_p|$.

Let us consider any point $a\in\A_p$. Since $a$ is the center of an ellipse $\sigma\in\I_p$ (containing the point $p$) having length of semi-minor and semi-major axes at most $M$, the distance between $a$ and $p$ is at most $\frac{M}{2}$.
Therefore, a disk $D$ of radius $M$, centered at $p$, contains all the points in $\A_p$.
Let $x=\frac{\sqrt{5}-1}{2}$ be a positive constant.
Let $D_i$ be a disk centered at $p$ having radius $r_i=\frac{M}{(1+x)^{i-1}}$, where $i\in[(\lfloor\log M\rfloor+1)]$. Note that $D_1,D_2,\ldots,D_m$ are concentric disks, centered at $p$. 
Let $\theta=\frac{1}{2}\cos^{-1}\left(\frac{1}{2} +\frac{1}{1+\sqrt{5}}\right)$ be a constant angle in $(0,\frac{\pi}{10}]$.
Similar to the case of rectangles, now we define the annular region $A_i=D_i\setminus D_{i+1}$.
Let {$C(\theta,r_i)$} be a \emph{circular sector}  obtained by taking the portion of the disk $D_i$ by  a conical boundary with the apex at the center $p$ of the disk and  $\theta$ as the half of the cone angle (for an illustration see Figure~\ref{fig:ann_crea}). For any $i\in [\lfloor\log M\rfloor+1]$, let us define  the $i$th \emph{circular block} $C_{i,\theta}=C(\theta,r_{i}) \setminus C(\theta,r_{i+1})$ (for an illustration see Figure~\ref{fig:cone}). 
Notice that all the $i$th circular blocks contain all the ellipses of $\I_p$ having length of both the semi-major and semi-minor axes at least $r_i$.

Similar to Lemma~\ref{lm:Cardinality_A_i}, we have the following lemma. 
\begin{lemma}\label{lemma_ellipse}
$|\A_i^p|\leq \lceil\frac{\pi}{\theta}\rceil$.
\end{lemma}

\begin{proof}
Notice that the total angle of any disk $\cal D$ centered at $p$ is $2\pi$. If any cone having apex at $p$ and angle $2\theta$, then at most $\lceil\frac{\pi}{\theta}\rceil$ cones will cover the entire $\cal D$. Thus, it is easy to observe that $\Bigl\lceil\frac{2\pi }{2\theta }\Bigr\rceil$  circular blocks will entirely cover $A_i$, there are at most $\frac{\pi}{\theta}$ circular blocks $C_{i,\theta}$ in $A_i$.
Now, we will show that in each circular block our algorithm places only one point.
Let $q_1$ be the first piercing point placed by $\textsc{Algo-Center}$ in $C_{i,\theta}$. For a contradiction, let us assume that $\textsc{Algo-Center}$ places another piercing point $q_2\in C_{i,\theta}$, where $q_2$ is center of some ellipse $\sigma\in \I_p$.
Since $\sigma$ contains points $p$ and $q_2$, and the distance between them is at least $r_i$.
Note that the maximum distance between any two points in $C_{i,\theta}$ is at most $\max\{\overline{ln},\overline{on}\}$. It is easy to observe that $\max\{\overline{ln},\overline{on}\}$ is at most $r_i$. As a result, $\sigma$ is already pierced by $q_1$. This  contradicts that our algorithm places two piercing points in $C_{i,\theta}$.
Hence, $\textsc{Algo-Center}$ places at most one piercing point in the circular block $C_{i,\theta}$ to pierce an ellipse in $\I_p$.
Hence, the lemma follows.
\end{proof}
Since $\cup \A^{p}_{i}=\A_p$ and due to Lemma~\ref{lemma_ellipse} we have $|\A^{p}_{i}|\leq \frac{\pi}{\theta}$, therefore $|\A_p|\leq \lceil\frac{\pi}{\theta}\rceil\times (\lfloor\log M\rfloor+1)=O(\log M)$. Hence, the theorem follows.
\end{proof}

\subsubsection{Generalization to Ellipsoids in $\mathbb{R}^d$}\label{sect_gen_ellip}

Similar to the two-dimensional case, we construct a $d$-dimensional ball $B$ of radius $M$ centered at $p\in\OO$, containing all the centers of the $d$-dimensional ellipsoids in $\I_p$.
We can partition the $d$-dimensional ball $B$ into $\lfloor\log M\rfloor+1$ smaller concentric $d$-dimensional balls $B_i$. Specifically, the $d$-dimensional ball ball $B_{i}$ has radius $\frac{M}{2^{{i}-1}}$. Similar to the two dimensional case, here also we define the annular region $A_i=B_i\setminus B_{i+1}$, where $i\in [(\lfloor\log M\rfloor+1)]$. Notice that the annular region $A_i$ contains all the $d$-dimensional ellipsoids of $\I_p$ such that the length of all the principal semi-axes is at least $\frac{M}{2^{i-1}}$. We prove that for each $i\in [(\lfloor\log M\rfloor+1)]$, we have $|\A^p_{i}|\leq \left(\left(1+\frac{1}{\sin(\theta/2)}\right)^d-1\right)$, where $\theta=\frac{1}{2}\cos^{-1}\left(\frac{1}{2} +\frac{1}{1+\sqrt{1+4\alpha^{2}}}\right)$ and  $x=\frac{\sqrt{5}-1}{2}$. 
Since $\cup \A^{p}_{i}=\A_p$. Thus, similar to Theorem~\ref{thm:alpha-ball}, we have the following theorem.

\begin{theorem}\label{thm_gen_ell}
    For a fixed $d\in\mathbb{Z}^{+}$, for piercing $d$-dimensional ellipsoids having the length of all the axis-aligned principal semi-axes in $[1,M)$, $\textsc{Algo-Center}$ has a competitive ratio of at most~$O(\log M)$.
\end{theorem}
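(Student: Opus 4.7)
The plan is to mirror, step by step, the proof of Theorem~\ref{thm:alpha-ball} and use higher-dimensional angular decompositions in place of the planar circular sectors. Fix a point $p\in\OO$ and let $\I_p\subseteq\I$ be the set of ellipsoids pierced by $p$; let $\A_p\subseteq\A$ be the points $\textsc{Algo-Center}$ places while processing $\I_p$. Since each $a\in\A_p$ is the center of some ellipsoid $\sigma\in\I_p$ containing $p$ whose principal semi-axes are all at most $M$, we have $\|a-p\|\le M$, so $\A_p$ lies inside the $d$-ball $B(p,M)$. As in the planar proof, I would partition $B(p,M)$ into $\lfloor\log M\rfloor+1$ concentric shells $A_i=B_i\setminus B_{i+1}$ with $B_i$ of radius $r_i=M/(1+x)^{i-1}$ for the same constant $x=(\sqrt{5}-1)/2$; because the ratio of the largest to smallest semi-axis of each ellipsoid is at most $2$, any center $q_2$ of $\sigma\in\I_p$ lying in $A_i$ implies $\|q_2-p\|\ge r_{i+1}$ and therefore every principal semi-axis of $\sigma$ has length at least $r_{i+1}/2$.

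Next, I would further partition each shell $A_i$ into angular ``conical blocks'' with apex at $p$ and half-angle $\theta=\frac{1}{2}\cos^{-1}\left(\tfrac{1}{2}+\tfrac{1}{1+\sqrt{5}}\right)$, the same constant used in the planar case. Using standard covering bounds for the unit sphere $S^{d-1}$ by spherical caps of angular radius $\theta$, the shell $A_i$ decomposes into at most $\bigl((1+1/\sin(\theta/2))^d-1\bigr)$ such cone-blocks; for fixed $d$ this is a constant, exactly the bound stated before Theorem~\ref{thm_gen_ell}.

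The key lemma (the higher-dimensional analogue of Lemma~\ref{lemma_ellipse}) is that each cone-block contains at most one point of $\A_p$. Its proof mimics the planar one: if $q_1,q_2\in\A_p$ lay in the same cone-block $C_{i,\theta}$, then the choice of $\theta$ together with $q_1,q_2\in A_i$ forces the Euclidean distance $\|q_1-q_2\|$ to be at most the smallest semi-axis ($\ge r_{i+1}/2$) of the ellipsoid $\sigma$ centered at $q_2$, so the inscribed ball of $\sigma$ already contains $q_1$, contradicting the fact that $\textsc{Algo-Center}$ inserted $q_2$. Combining the two partitions yields
\[
|\A_p|\le\bigl((1+1/\sin(\theta/2))^d-1\bigr)\cdot(\lfloor\log M\rfloor+1)=O(\log M),
\]
and since $|\A|\le\sum_{p\in\OO}|\A_p|$, the competitive ratio is $O(\log M)$ for fixed $d$.

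The main obstacle is the diameter computation for a cone-block in $\mathbb{R}^d$: unlike the planar case where the extreme pair $(l,n)$ or $(o,n)$ is immediate from a picture, in $\mathbb{R}^d$ one must argue that the diameter of $C_{i,\theta}$ is still realized on a two-dimensional axial section, so that the same law-of-cosines calculation with $r_i,r_{i+1}$ and angle $2\theta$ carries over. Once this reduction is in place, the arithmetic that fixed the constants $x$ and $\theta$ in the planar proof applies verbatim, and the asymmetry of the ellipsoid is absorbed by the constant-ratio assumption on its semi-axes (which guarantees the inscribed-ball containment used above).
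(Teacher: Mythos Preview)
Your outline is essentially the paper's proof: the same shell radii $r_i=M/(1+x)^{i-1}$ with $x=(\sqrt{5}-1)/2$, the same cone half-angle $\theta$, the same covering count $\bigl(1+1/\sin(\theta/2)\bigr)^d-1$ for the angular decomposition (which the paper obtains by quoting a cone-covering lemma of Devroye--Gy\"orfi--Lugosi), and the same reduction of the block-diameter computation to a two-dimensional axial section, proved there by the law of cosines exactly as you anticipate.

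The one step that does not go through as written is the inscribed-ball containment. With these constants the diameter of a cone-block $C_{i,\theta}$ comes out equal to the inner shell radius $r_{i+1}$, not to $r_{i+1}/2$; yet the only lower bound you have on the smallest semi-axis of $\sigma$ is $a_{\min}\ge r_{i+1}/2$ (from $a_{\max}\ge\|p-q_2\|\ge r_{i+1}$ together with the ratio-$2$ hypothesis). Hence the inequality ``$\|q_1-q_2\|\le a_{\min}$'' that you invoke is off by a factor of~$2$, and the inscribed ball of $\sigma$ need not contain $q_1$. The paper does not argue via the inscribed ball at this point; after bounding the block diameter it simply asserts that $\sigma$ is already pierced by $q_1$. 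If you want to keep your inscribed-ball route, shrink $\theta$ (or choose a smaller $x$) so that the block diameter drops below half the inner radius; this only inflates the constant in front of $\log M$, which is immaterial for fixed~$d$.
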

\begin{proof}
 Let  $\I$ be the set of input ellipsoids in $\IR^d$ presented to the algorithm.  Let $\A$ and $\OO$ be two piercing sets for $\I$ returned by $\textsc{Algo-Center}$ and the offline optimal, respectively.  
Let $p$ be any piercing point of the offline optimal $\OO$. Let $\I_p\subseteq \I$ be the set of input ellipsoids  pierced by the point $p$. Let $\A_p$ be the set of piercing points placed by our algorithm to pierce all the ellipsoids in $\I_p$. To prove the theorem,  we will give an upper bound of $|\A_p|$.

Let us consider any point $a\in\A_p$. Since $a$ is the center of an ellipsoids $\sigma\in\I_p$ (containing the point $p$) having length of principal semi-axes is at most $M$, the distance between $a$ and $p$ is at most $\frac{M}{2}$.
Therefore, a ball $B$ of radius $M$, centered at $p$, contains all the points in $\A_p$.
Let $x=\frac{\sqrt{5}-1}{2}$ be a positive constant.
Let $D_i$ be a disk centered at $p$ having radius $r_i=\frac{M}{(1+x)^{i-1}}$, where $i\in[(\lfloor\log M\rfloor+1)]$. Note that $D_1,D_2,\ldots,D_m$ are concentric balls, centered at $p$. 
Let $\theta=\frac{1}{2}\cos^{-1}\left(\frac{1}{2} +\frac{1}{1+\sqrt{5}}\right)$ be a constant angle in $(0,\frac{\pi}{10}]$.
Similar to the case of rectangles, now we define the annular region $A_i=D_i\setminus D_{i+1}$.
Let {$H(\theta,r_i)$} be a \emph{hyper-spherical sector}  obtained by taking the portion of the ball $B_i$ by  a conical boundary with the apex at the center $p$ of the ball and  $\theta$ as the half of the cone angle. For any $i\in [\lfloor\log M\rfloor+1]$, let us define  the $i$th \emph{hyper-spherical block} $H_{i,\theta}=H(\theta,r_{i}) \setminus H(\theta,r_{i+1})$. 
Notice that all the $i$th hyper-spherical blocks contain all the ellipsoids of $\I_p$ having length of all the principal semi-axes is at least $r_i$.

Since $\cup \A^{p}_{i}=\A_p$ and due to Lemma~\ref{lemma_ellipse} we have $|\A^{p}_{i}|\leq \left(\left(1+\frac{1}{\sin(\theta/2)}\right)^d-1\right)$, therefore we have $|\A_p|\leq \left(\left(1+\frac{1}{\sin(\theta/2)}\right)^d-1\right)\times (\lfloor\log M\rfloor+1)=O(\log M)$. Hence, the theorem follows.
\end{proof}

Similar to Lemma~\ref{lm:Cardinality_A_i}, we have the following lemma. 
\begin{lemma}\label{lemma_ellipsoid}
$|\A^p_{i}|\leq \left(\left(1+\frac{1}{\sin(\theta/2)}\right)^d-1\right)$, where $\theta=\frac{1}{2}\cos^{-1}\left(\frac{1}{2} +\frac{1}{1+\sqrt{1+4\alpha^{2}}}\right)$ and  $x=\frac{\sqrt{5}-1}{2}$.
\end{lemma}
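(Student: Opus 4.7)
The plan is to mirror the two-dimensional argument of Lemma~\ref{lemma_ellipse}, with two modifications: replace the planar angular count $\lceil \pi/\theta \rceil$ by a $d$-dimensional packing bound, and reduce the ``one point per block'' step from $d$ dimensions back to the two-dimensional case already established.

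For the counting part, I would bound the number of hyper-spherical blocks $H_{i,\theta}$ needed to cover the annular shell $A_i = B_i\setminus B_{i+1}$. Since each such block is the radial extrusion of a spherical cap of angular radius $\theta$ on the unit sphere $S^{d-1}$, it suffices to bound the covering number of $S^{d-1}$ by such caps. A standard volume-based packing estimate does this: fix a maximal set of unit directions $v_1,\ldots,v_N$ separated pairwise by angle at least $\theta$; then the open balls of radius $\sin(\theta/2)$ around the $v_i$ are disjoint and all lie inside the ball of radius $1+\sin(\theta/2)$ centered at the origin, giving $N\leq (1+1/\sin(\theta/2))^d$. A careful accounting (discarding the contribution corresponding to no cone direction) yields the stated bound $(1+1/\sin(\theta/2))^d - 1$.

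For the ``one point per block'' step, the key observation is that any two points $q_1,q_2 \in H_{i,\theta}$, together with the apex $p$, lie in a common two-dimensional plane through $p$. Restricting $H_{i,\theta}$ to this plane yields precisely a planar block of the form $C_{i,\theta}$ analyzed in Lemma~\ref{lemma_ellipse}. Suppose the algorithm places both $q_1$ and $q_2$ in $H_{i,\theta}$, with $q_2$ being the center of an ellipsoid $\sigma \in \I_p$ that was unpierced at the moment of $q_2$'s placement. Because the ratio between the largest and shortest semi-principal axes of $\sigma$ is at most $2$, and $\sigma$ contains both $p$ and $q_2$, the shortest semi-axis of $\sigma$ is at least $|pq_2|/2$, so $\sigma$ contains the $L_\infty$-ball of radius $|pq_2|/2$ centered at $q_2$. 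The planar diameter bound from Lemma~\ref{lemma_ellipse} (with the specific choices of $x=(\sqrt{5}-1)/2$ and $\theta$ tuned to the aspect-ratio parameter $\alpha$) ensures $|q_1-q_2|\leq |pq_2|/2$; hence $q_1\in \sigma$, contradicting the hypothesis that $\sigma$ was unpierced by $q_1$ when $q_2$ was added.

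Combining the two steps, $|\A^p_i|$ is at most the number of hyper-spherical blocks, namely $(1+1/\sin(\theta/2))^d - 1$. I expect the main obstacle to be the $d$-dimensional packing count, in particular verifying that the covering-by-caps argument gives exactly the bound stated in the lemma (and is compatible with the ``cone minus smaller cone'' shape of $H_{i,\theta}$); the planar reduction in the second step is geometrically clean once Lemma~\ref{lemma_ellipse} is in hand. A secondary care-point is matching the parameter $\alpha$ in the definition of $\theta$ with the ratio-$2$ assumption on semi-axes, ensuring the planar distance bound transfers verbatim to the $d$-dimensional setting.
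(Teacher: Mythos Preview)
Your two-step plan is exactly the paper's: (1) cover $A_i$ by $\bigl((1+1/\sin(\theta/2))^d-1\bigr)$ hyper-spherical blocks---the paper simply cites \cite[Lemma~5.3]{DevroyeGL96} for this count rather than giving your packing sketch---and (2) show each block receives at most one piercing point via a diameter bound, which the paper packages as Claim~\ref{claim:max_dist} and whose proof is precisely your planar reduction through the plane spanned by $p,q_1,q_2$.

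There is, however, a genuine gap in your execution of step~(2). First, an axis-aligned ellipsoid with smallest semi-axis $a_{\min}$ contains the \emph{Euclidean} ball of radius $a_{\min}$ about its center, not the $L_\infty$-ball. Second, and more substantively, the planar computation in Lemma~\ref{lemma_ellipse} (with $x=(\sqrt 5-1)/2$ and the $\theta$ stated there) makes the diameter of the block equal to its \emph{inner} radius $r_{i+1}$; since $|pq_2|$ can be as small as $r_{i+1}$, this yields only $|q_1-q_2|\le |pq_2|$, not the $|q_1-q_2|\le |pq_2|/2$ you assert. Hence ``smallest semi-axis $\ge |pq_2|/2$'' is not enough to force $q_1\in\sigma$. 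The $\alpha$-dependent $\theta$ in the present lemma's statement is exactly what is meant to close this gap, but that requires redoing the cosine-rule computation with the new $\theta$, not invoking Lemma~\ref{lemma_ellipse} verbatim; so your ``secondary care-point'' is in fact the crux. The paper is itself terse at this step---it asserts ``as a result, $\sigma$ is already pierced by $q_1$'' immediately after Claim~\ref{claim:max_dist} without spelling out the aspect-ratio link---so your instinct to make this explicit is correct, but the inequality you wrote down does not hold as stated.
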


\begin{proof}
Due to~\cite[Lemma~5.3]{DevroyeGL96},
for any fixed $\theta\in\left(0,\frac{\pi}{2}\right)$, we need at most $\left(\left(1+\frac{1}{\sin(\theta/2)}\right)^d-1\right)$ hyper-cones with angle $2\theta$ completely cover $\IR^d$.
As a result, we need at most $\left(\left(1+\frac{1}{\sin(\theta/2)}\right)^d-1\right)$ hyper-spherical blocks $H{(i,\theta)}$ to completely cover the annular region $A_i$, for any fixed $\theta\in\left(0,\frac{\pi}{2}\right)$.
Now, we will show that in each hyper-spherical block our algorithm places only one point.
Let $q_1$ be the first piercing point placed by $\textsc{Algo-Center}$ in $H_{i,\theta}$. For a contradiction, let us assume that $\textsc{Algo-Center}$ places another piercing point $q_2\in H_{i,\theta}$, where $q_2$ is center of some ellipsoid $\sigma\in \I_p$.
Since $\sigma$ contains points $p$ and $q_2$, and the distance between them is at least $r_i$.
Due to Claim~\ref{claim:max_dist} the maximum distance between any two points in $H_{i,\theta}$ is at most $r_i$. As a result, $\sigma$ is already pierced by $q_1$. This  contradicts that our algorithm places two piercing points in $H_{i,\theta}$.
Hence, $\textsc{Algo-Center}$ places at most one piercing point in the hyper-spherical block $H_{i,\theta}$ to pierce an ellipse in $\I_p$.Hence, the lemma follows.
\end{proof}

\begin{claim}\label{claim:max_dist}
   The distance between any two points in $H_{i,\theta}$ is at most~$r_i$.
\end{claim}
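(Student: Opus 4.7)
The plan is to reduce the diameter computation for the hyper-spherical block $H_{i,\theta}$ to a two-dimensional optimization via rotational symmetry about the cone axis, then apply the law of cosines, and finally verify that the tuning of $\theta$ and the scaling ratio $(1+x)$ between consecutive radii makes the resulting bound strictly less than $r_i$.

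First I would exploit the rotational symmetry of $H_{i,\theta}$ about the cone axis through $p$. For any $q_1, q_2 \in H_{i,\theta}$, the distance $\|q_1 - q_2\|$ depends only on the radii $\rho_k = \|q_k - p\|$ and on the angle $\beta$ between the vectors $q_1 - p$ and $q_2 - p$. The defining inequalities of the block give $r_{i+1} \leq \rho_k \leq r_i$, and since each vector $q_k - p$ makes an angle at most $\theta$ with the cone axis, the spherical triangle inequality forces $\beta \leq 2\theta$. The law of cosines then yields
\[
\|q_1 - q_2\|^2 \;=\; \rho_1^2 + \rho_2^2 - 2\rho_1 \rho_2 \cos\beta.
\]
Since $2\theta < \pi/4$, the cosine is positive and decreasing on $[0, 2\theta]$, so the worst case is $\beta = 2\theta$, reducing the problem to maximizing the quadratic form $f(\rho_1, \rho_2) = \rho_1^2 + \rho_2^2 - 2\rho_1 \rho_2 \cos(2\theta)$ over the box $[r_{i+1}, r_i]^2$.

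Next I would observe that $f$ is convex on this box, since its Hessian has nonnegative eigenvalues $2(1 \pm \cos(2\theta))$, so the maximum is attained at a corner. By symmetry and monotonicity in the radii, only two corner values are interesting: the symmetric $(r_i, r_i)$ giving $4 r_i^2 \sin^2\theta$, and the asymmetric $(r_i, r_{i+1})$ giving $r_i^2 + r_{i+1}^2 - 2 r_i r_{i+1} \cos(2\theta)$. Using $\cos(2\theta) = \frac{1+\sqrt{5}}{4}$, one computes $4\sin^2\theta = \frac{3 - \sqrt{5}}{2} = x^2$, so the first value equals $x^2 r_i^2$. For the second, the recursion $r_{i+1} = r_i/(1+x)$ together with the golden ratio identity $x(1+x) = 1$ (which holds since $x = \frac{\sqrt{5}-1}{2}$) gives $r_{i+1} = x\, r_i$; then $2 x \cos(2\theta) = \frac{(\sqrt{5}-1)(1+\sqrt{5})}{4} = 1$, so the second value also equals $r_i^2 (1 + x^2 - 1) = x^2 r_i^2$.

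Hence the diameter of $H_{i,\theta}$ is exactly $x r_i = \frac{\sqrt{5}-1}{2}\, r_i$, which is strictly less than $r_i$, giving the claim with room to spare. The main obstacle is purely computational: verifying that the two corner values coincide and both reduce to $x^2 r_i^2$ through the golden-ratio identities encoded in the specific choices of $x$ and $\theta$. Once this bookkeeping is carried out, the remainder of the argument is routine planar geometry lifted to $\mathbb{R}^d$ through the symmetry reduction.
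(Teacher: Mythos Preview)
Your argument is correct and follows essentially the same route as the paper: reduce to a two-dimensional configuration, apply the law of cosines at the extremal pairs of points of the block, and use the golden-ratio identities built into the choices of $x$ and $\theta$ to evaluate the result. Your version is somewhat cleaner---you justify the reduction to $\beta\le 2\theta$ via the spherical triangle inequality and the location of the maximum via convexity of $f(\rho_1,\rho_2)$ on the box, whereas the paper simply asserts (with a figure) that the maximum distance is $\max\{\overline{\ell n},\overline{on}\}$ and computes those two quantities directly. You also obtain the sharper value $xr_i=r_{i+1}$ for the diameter rather than just the bound $r_i$ stated in the claim; the paper's computation in fact also yields the inner radius, though a slip in the indexing there makes it read as~$r_i$.
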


\begin{figure}[htbp]
  \centering
   \begin{subfigure}[b]{0.24\textwidth}
          \centering
        \includegraphics[width=45mm]{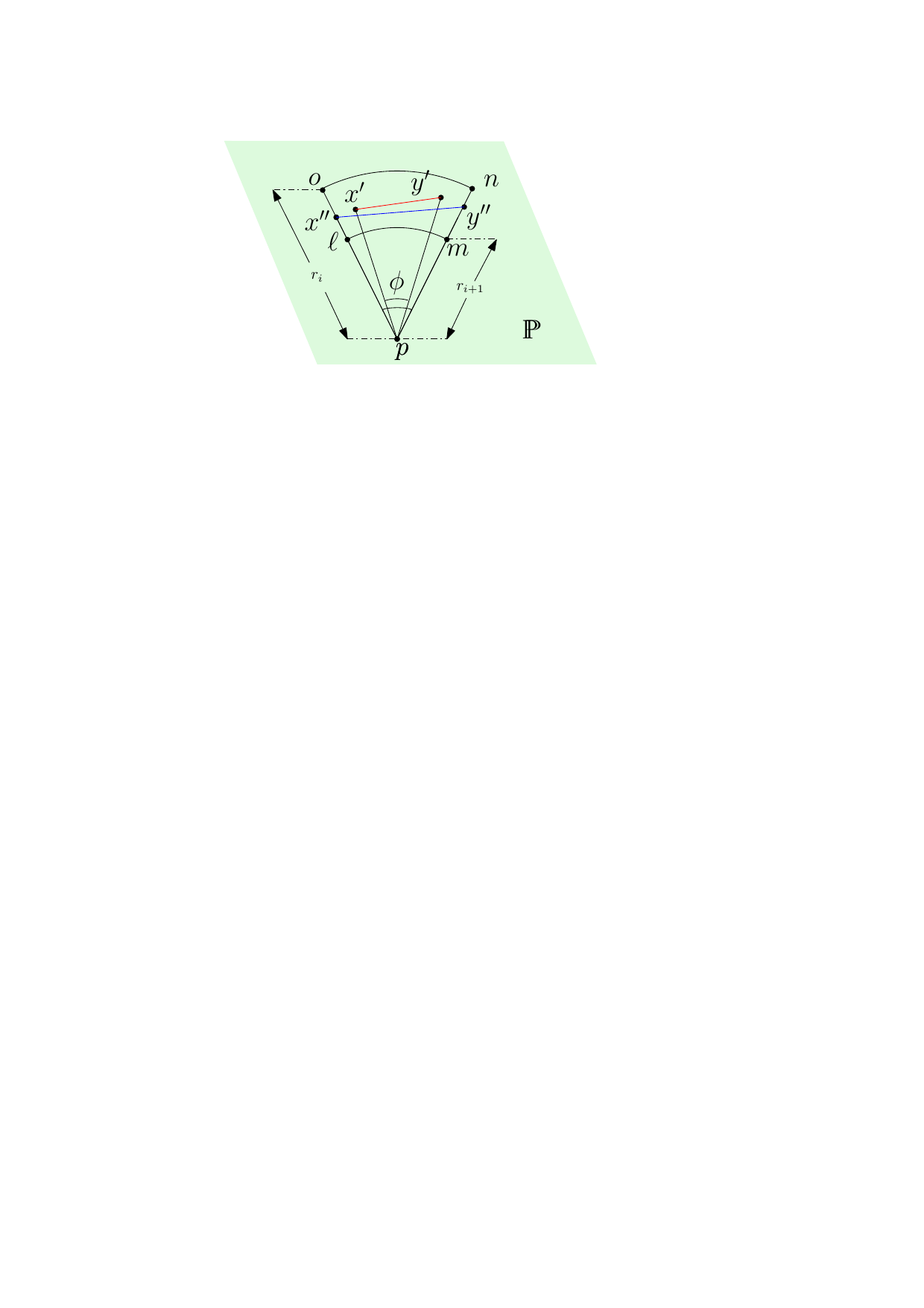}
    \caption{}
    \label{fig:plane}
     \end{subfigure}
      \hfill
      \begin{subfigure}[b]{0.24\textwidth}
    \centering
    \includegraphics[width= 30mm]{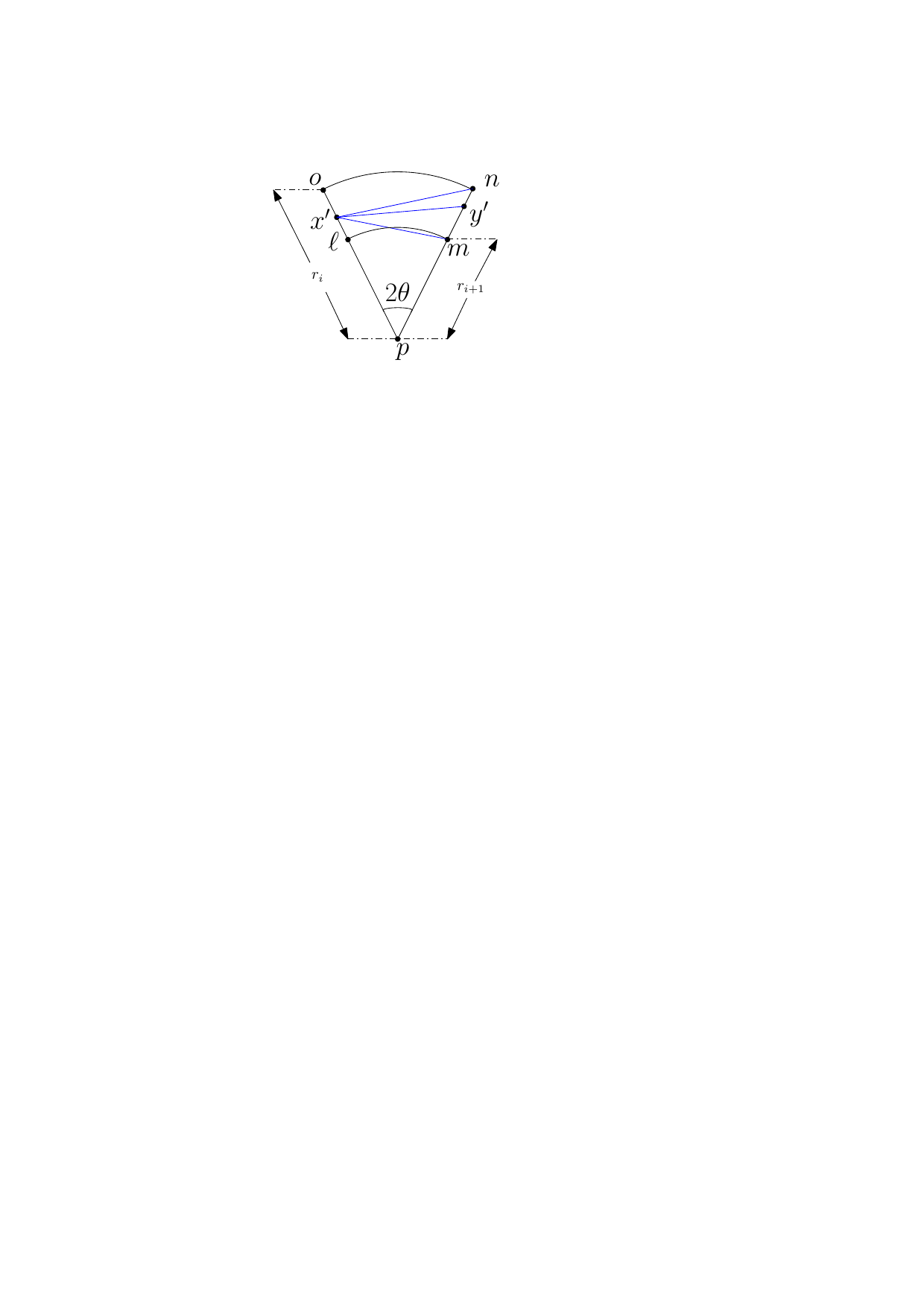}
    \caption{}
    \label{fig:des-1}
\end{subfigure}
     \hfill
    \begin{subfigure}[b]{0.24\textwidth}
          \centering
        \includegraphics[width=30mm]{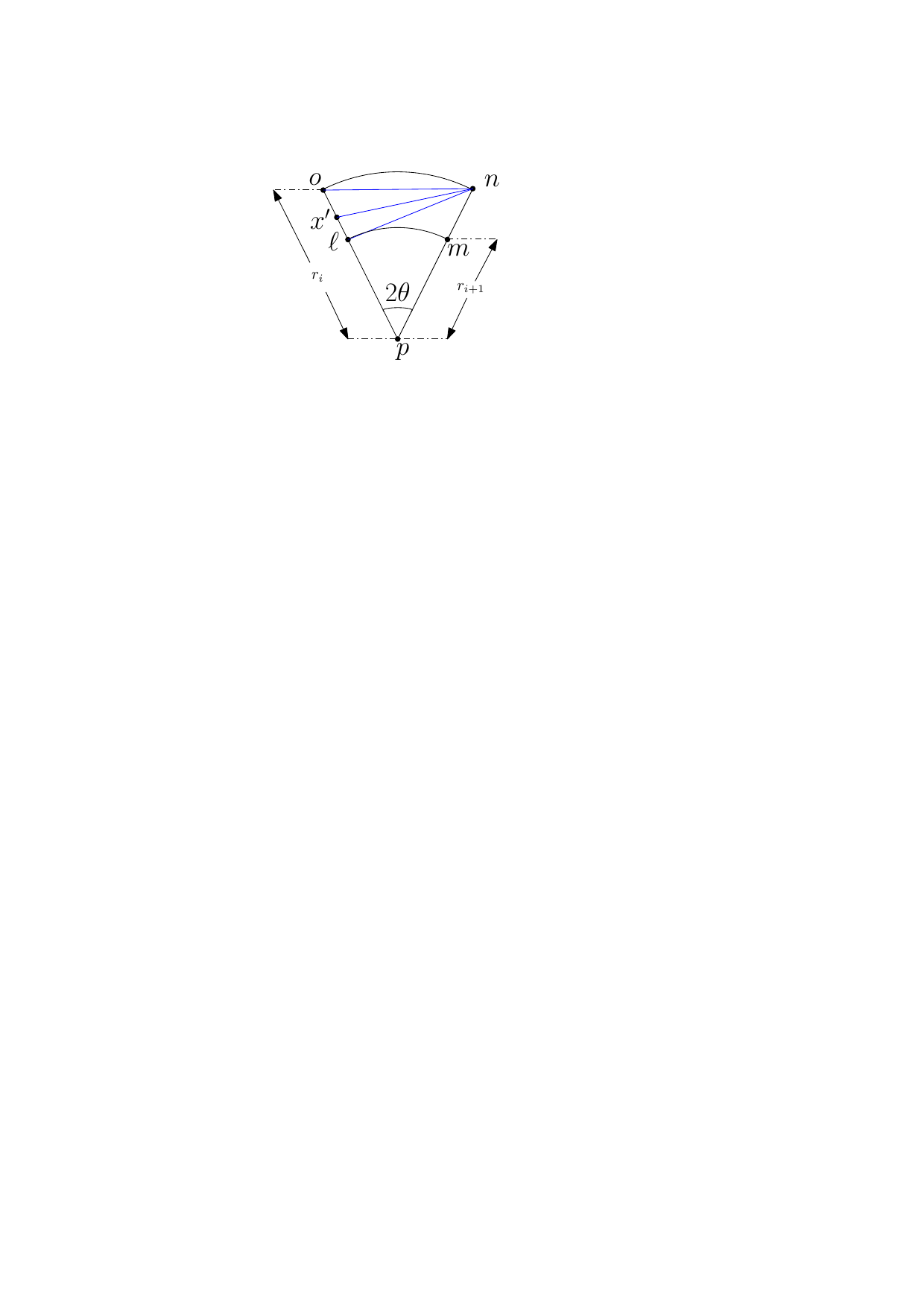}
    \caption{}
    \label{fig:des-2}
     \end{subfigure}
     \hfill
      \begin{subfigure}[b]{0.24\textwidth}
          \centering
    \includegraphics[width=30mm]{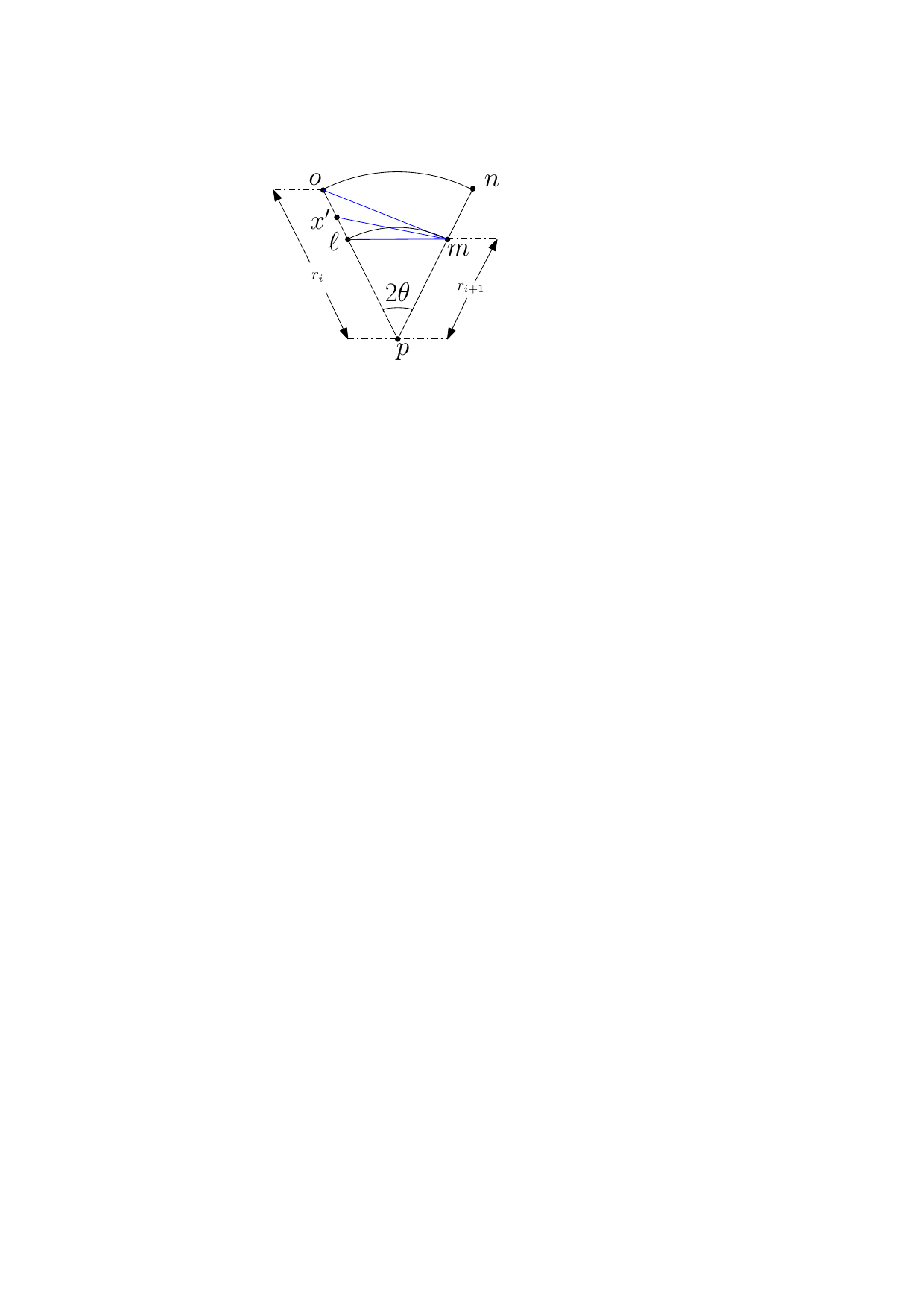}
    \caption{ }
    \label{fig:des-3}
     \end{subfigure}
 \caption{(a) Description of the plane $\mathbb{P}$. (b) Illustration of triangles $\triangle my'x'$ and $\triangle ny'x'$. (c) Illustration of triangles $\triangle oox'n$ and $\triangle \ell x'n$ (d) Illustration of triangles $\triangle ox'm$ and $\triangle x'\ell m$, in $T_{i,\theta}$.}
 \label{fig:projection}
\end{figure}
\begin{proof}
Observe Figure~\ref{fig:projection}, where a detail of the projection hyper-spherical block is depicted. Note that the maximum distance between any two points in $T_{i,\theta}$ is at most $\max\{\overline{ln},\overline{on}\}$. First, consider  the triangle $\triangle \ell p n$ (see Figure~\ref{fig:projection}). By the cosine rule of the triangle,  we have: 
 
 \begin{align*}
 \overline{\ell n}^2=&\overline{p\ell}^2+\overline{pn}^2-2\overline{p\ell}\ \overline{pn}\cos{(2\theta)}\\
=&\left(\frac{M}{(1+x)^{i-1}}\right)^2+\left(\frac{M}{(1+x)^{i-2}}\right)^2-2\left(\frac{M}{(1+x))^{i-1}}\right)\left(\frac{M}{(1+x))^{i-2}}\right)\cos(2\theta)\\
=&\left(\frac{M}{(1+x)^{i-2}}\right)^2 \left(\left(\frac{1}{(1+x)}\right)^2+1-2\left(\frac{1}{(1+x)}\cos{(2\theta)}\right)\right)\\
 =&\left(\frac{M}{(1+x)^{i-1}}\right)^2 \left(1+(1+x)^2-2{(1+x)}\cos{(2\theta)}\right)
 \end{align*}

Since $\theta=\frac{1}{2}\cos^{-1}\left(\frac{1}{2} +\frac{1}{1+\sqrt{5}}\right)$ and  $x=\frac{\sqrt{5}-1}{2}$, $cos(2\theta)=\frac{(x+2)}{2(x+1)}$ and $x^2+x=1$. Now substituting these values in the above equation, we get
\[
\overline{\ell n}^2=r_i^2 \left(1+(1+x)^2-(2+x)\right)=r_i^2 \left(1+1+x^2+2x-2-2x\right)=r_i^2 \left(x^2+x\right)=\left(r_i\right)^2.\]
Now, consider the triangle $\triangle opn$ (see Figure~\ref{fig:projection}). Here we have: 
\begin{align*}
\overline{o n}^2=&2\left(\frac{M}{\alpha(1+x)^{i-2}}\right)^2-2\left(\frac{M}{\alpha(1+x)^{i-2}}\right)^2\cos{(2\theta)}=2\left(\frac{M}{\alpha(1+x)^{i-2}}\right)^2\left(1-\cos{(2\theta)}\right) \\
=&2\left(\frac{M}{\alpha(1+x)^{i-1}}\right)^2(1+x)^2\left(1-\cos{(2\theta)}\right)=2r_i^2(1+x)^2\left(1-\cos{(2\theta)}\right).
\end{align*}
 Now substituting the values of $cos(2\theta)=\frac{(x+2)}{2(x+1)}$ and $x^2+x=1$
in the above equation, we get
\begin{align*}\overline{o n}^2=&2r_i^2(1+x)^2\left(1-\frac{(x+2)}{2(x+1)}\right)\\
=&2r_i^2(1+x)^2\left(\frac{2(x+1)-(x+2)}{2(x+1)}\right)\\
=&r_i^2(1+x)x=\left(r_i\right)^2.\end{align*}
Note that $\overline{ln}=\overline{on}=r_i$. Thus $r_i$ is the maximum distance between any two points in the region $H_{i,\theta}$.
\end{proof}

\subsection{Online Piercing Set for $\alpha$-Fat Objects in $\IR^d$}\label{sect_fat}

In this section, we focus on piercing $\alpha$-fat objects in $\IR^d$. 
Currently, the best known bound on competitive ratio is $O\left((\frac{2}{\alpha}+2)^d \log M\right)$ (\cite{DeJKS24}). We improve this result for $(\alpha\in\frac{1}{2},1]$ by introducing a simple deterministic algorithm with a slightly better competitive ratio of $ O\left((\frac{2}{\alpha}+\frac{7}{8})^d \log M\right)$. This resolves an open problem posed by De et al.~\cite{DeJKS24}, which seeks to narrow the gap between the lower and upper bounds for piercing $\alpha$-fat objects in higher dimensions. We consider all the distances in this section to be under $L_{\infty}$-norm, unless stated otherwise. 

Before describing the algorithm, we first present some essential ingredients that will be used for describing the algorithm.
For any $j\in2[\lfloor\log M\rfloor]\cup\{0\}$, let $\ell_j=2.2^{\frac{j}{2}+1}$ and $u_j=3.2^{\frac{j}{2}+1}$ if $j$ is even, and $\ell_j=3.2^{\frac{j-1}{2}+1}$ and $u_j=4.2^{\frac{j-1}{2}+1}$ if $j$ is odd.

\noindent
\textbf{Layer of the objects.} We partition the set of all similarly-sized fat objects into $[2\lfloor\log M\rfloor+1]\cup\{0\}$ layers. When $j$ is even (respectively, odd), the layer $L_j$ contains the fat objects having widths in $[\ell_j,u_j)$.\\

\noindent\textbf{Lattice.}\label{sect:4.1}
 Let $\Pi_d^j=\{\alpha_1 \ell_j{\bf e}_1+\alpha_2 \ell_j{\bf e}_2+\ldots+\alpha_d \ell_j{\bf e}_d\ |\ (\alpha_1,\alpha_2,\ldots,\alpha_d)$ $\in \mathbb{Z}^d\}$ be a $d$-dimensional lattice spanned by the standard unit vectors. To visualize $\Pi_1^j, \Pi_2^j$ and $\Pi_3^j$, please refer to Figure~\ref{fig:lattice}.

\begin{figure}[htbp]
    \centering
\includegraphics[width=90 mm]{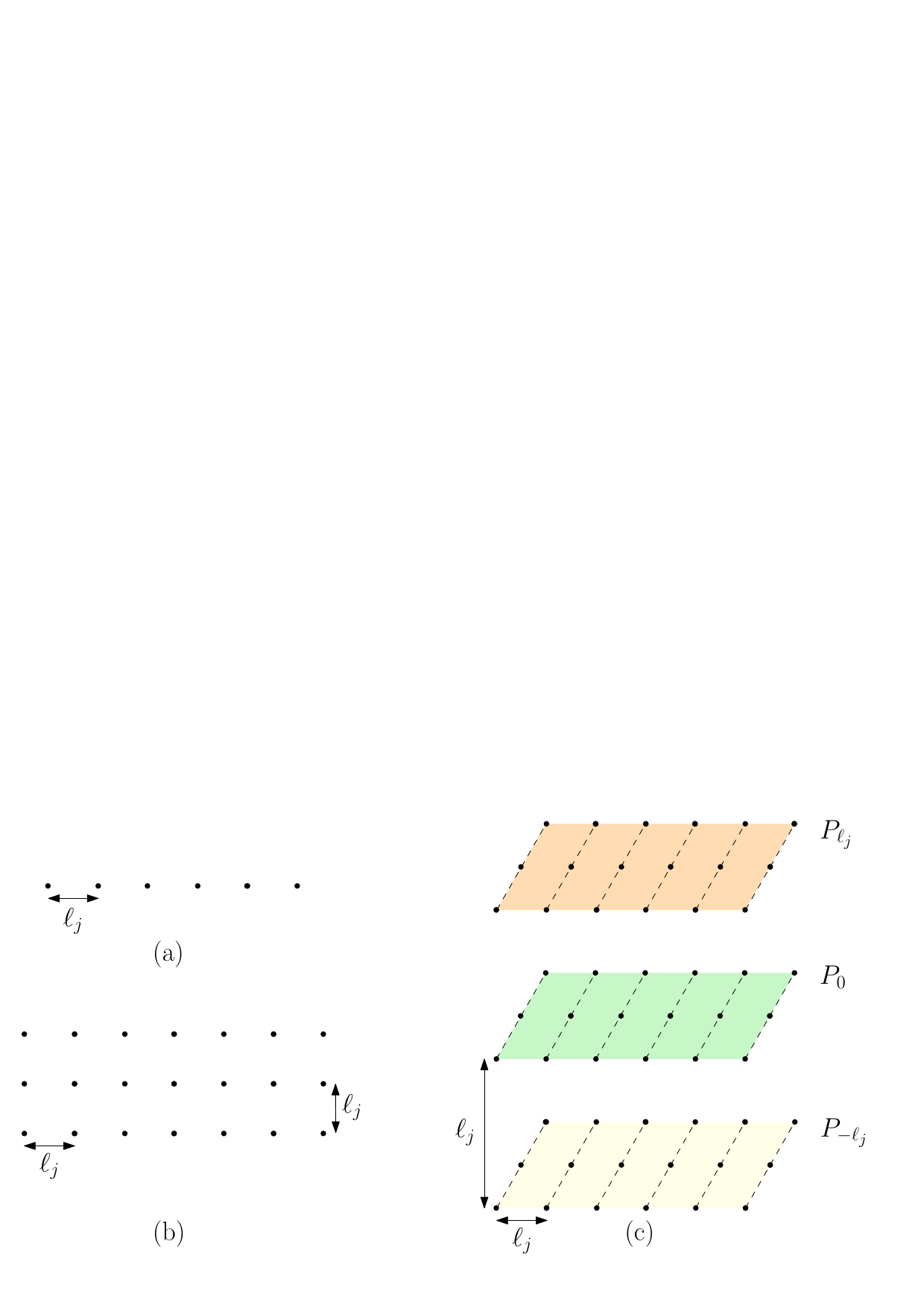}
       \caption{The points of $\Pi_d^j$ are drawn (a) for $d=1$, (b) for $d=2$, (c) for $d=3$. In (c), the projections of planes $P_{\ell_j}$, $P_{0}$ and $P_{-\ell_j}$ over a rectangular region is depicted. Here, for any $k'\in\IR$, $P_{k'}=\{y\in\IR^{d}\ |\ y(x_{d})=k'\}$ is a hyper-plane.}
       \label{fig:lattice}
\end{figure}

Now, we present a simple deterministic online algorithm for piercing fat objects in $\IR^d$.\\

\noindent
\textbf{Online algorithm $\AF$.}
Let $\A$ be the piercing set maintained by our algorithm to pierce the incoming fat objects. Initially, $\A=\emptyset$. On receiving a new input object $\sigma$ with width $s$, we do the following. If it is already hit by $\A$, then ignore $\sigma$. Otherwise, first determine the layer $L_j$ in which $\sigma$ belongs, where $j=\log_{\frac{3}{2}} s$. Then, our algorithm choose the \emph{closest point} $r$ from $\Pi_d^j\cap \sigma$, and add $r$ to $\H$.\\

\noindent
\textbf{Efficient implementation of the algorithm.}
For the efficient implementation of the algorithm, given a fat object $\sigma\in \IR^d$ centered at $q$ with width $s$, it is crucial to determine the layer $L_j$ to which the fat object belongs. This can be done in $O(1)$ time, since $j=\log_{\frac{3}{2}} s$.
Next, identifying the closest point from $\Pi_d^j$ to the fat object's centre $c$ is important, and according to the following lemma, it can be done in $O(d)$ time.
\begin{lemma}\label{lemma_entire}
    For any point $q$ in $\IR^d$, there exists a point $r$ in $\Pi_d^{j}$ such that $d_{\infty}(q,r)\leq\frac{\ell_j}{2}$. Given the center, $q$ of the fat object, the closest point $r\in \Pi_d^j$ can be found in $O(d)$ time.
\end{lemma}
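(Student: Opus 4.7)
The plan is to construct the closest lattice point $r$ explicitly by coordinate-wise nearest-integer rounding. For each $i \in [d]$, I would define $\alpha_i := \lfloor q(x_i)/\ell_j + 1/2 \rfloor$ and set $r := \sum_{i=1}^d \alpha_i \ell_j {\bf e}_i$. By the definition of $\Pi_d^j$, this point lies in the lattice.

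For the distance bound, observe that nearest-integer rounding ensures $|q(x_i)/\ell_j - \alpha_i| \leq 1/2$ for every coordinate $i$, which rescales to $|q(x_i) - r(x_i)| \leq \ell_j/2$. Taking the maximum over all $d$ coordinates gives
\[
d_{\infty}(q,r) \;=\; \max_{i\in[d]} |q(x_i) - r(x_i)| \;\leq\; \frac{\ell_j}{2},
\]
which establishes the existence claim.

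To see that this $r$ is in fact the closest lattice point of $\Pi_d^j$ to $q$ (not merely within $\ell_j/2$), I would note that any other lattice point $r' \in \Pi_d^j$ must differ from $r$ in at least one coordinate $i^*$, and there by an amount which is an integer multiple of $\ell_j$, hence at least $\ell_j$. On that coordinate, the triangle inequality gives $|q(x_{i^*}) - r'(x_{i^*})| \geq \ell_j - |q(x_{i^*}) - r(x_{i^*})| \geq \ell_j - \ell_j/2 = \ell_j/2$, so $d_{\infty}(q,r') \geq d_{\infty}(q,r)$. Since $q$ is the center of the input fat object $\sigma$ of width at least $\ell_j$ (as $\sigma \in L_j$), the ball of $L_\infty$-radius $\ell_j/2$ around $q$ is contained in $\sigma$, so this closest lattice point $r$ indeed lies in $\Pi_d^j \cap \sigma$.

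For the running time, evaluating each $\alpha_i$ requires one division by $\ell_j$, one addition of $1/2$, and one floor, which is $O(1)$ arithmetic operations per coordinate, for a total of $O(d)$ time to produce $r$. There is no real obstacle here: the statement is essentially the standard fact that the cubic lattice with spacing $\ell_j$ has covering radius $\ell_j/2$ under the $L_\infty$ metric, and the product structure of $L_\infty$ lets the nearest-point problem decouple across coordinates.
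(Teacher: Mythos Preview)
Your proof is correct and follows essentially the same approach as the paper: both construct $r$ by coordinate-wise rounding of $q$ to the nearest multiple of $\ell_j$ (you via $\lfloor q(x_i)/\ell_j + 1/2\rfloor$, the paper via an explicit case split on the fractional part), then take the max over coordinates to bound $d_\infty(q,r)\le \ell_j/2$ and note that this costs $O(1)$ work per coordinate. Your additional paragraph verifying that $r$ is actually the \emph{closest} lattice point and that $r\in\sigma$ goes slightly beyond what the paper spells out, but is consistent with it.
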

\begin{proof}
Notice that for any point $r\in\Pi_d^j$, each coordinate of $r$ is an integral multiple of $\ell_j$.
For any point $q\in\IR^d$, for each $j\in[d]$, the $j$th coordinate of the point $q$ can  be uniquely written as $q(x_j)=z_j+y_j$, where $z_j\in\ell_j\mathbb{Z}$ and $y_j\in\left[0,\ell_j\right)$. Here,  by  $\beta\mathbb{Z}$ we mean the set $\left\{\beta z \ \Big{|}\ z\in\mathbb{Z}\right\}$.
Now, we define the best point $r$ of $\Pi_d^j$ depending on the coordinates of $q$.
For each $j\in[d]$, we set the $j$th coordinate of $r$ as follows.
\[
r(x_j)=
\begin{cases}
    z_j,& \text{if $y_j\in\big[0,\frac{\ell_j}{2})$}\\
   z_j+\ell_j, & \text{if $y_j\in[\frac{\ell_j}{2},\ell_j)$}.
\end{cases}
\]
As per the construction of {the point} $r$, we have $|r(x_j)-q(x_j)|\leq \frac{\ell_j}{2}$ for each $j\in[d]$. As a result, $d_{\infty}(r,q)=\max_{j\in[d]}|r(x_i)-q(x_i)|\leq \frac{\ell_j}{2}$.
\end{proof}

\noindent
\textbf{Correctness of the algorithm.} 
Due to Lemma~\ref{lemma_entire}, there exists a point $r\in\Pi_d^j$ such that $d_{\infty}(c,r)\leq \ell_j$. Recall that any fat object in $L_j$  has width at least $\ell_j$, it contains a hypercube with side length at least $\ell_j$. Thus, $\sigma\in L_j$ must contain at least $r$. Hence, the above-mentioned online algorithm is a feasible algorithm.

\begin{theorem}\label{thm_dcube}
For piercing similarly-sized fat objects with widths in $[1,M)$, $\AF$ has a competitive ratio of at most~$O(\lfloor\frac{2}{\alpha}+\frac{7}{8}\rfloor^d\log M)$.
\end{theorem}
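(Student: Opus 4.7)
The plan is to mirror the analysis used for boxes (Theorem~\ref{thm_piercing_boxes}) and ellipsoids (Theorem~\ref{thm_gen_ell}). Let $\I$ be the input sequence, $\A$ the piercing set produced by $\AF$, and $\OO$ an offline optimum. For each $p \in \OO$, let $\I_p = \{\sigma \in \I : p \in \sigma\}$ and let $\A_p \subseteq \A$ collect the piercing points placed by $\AF$ in response to objects of $\I_p$. Since $\A = \bigcup_{p \in \OO} \A_p$, the competitive ratio is at most $\max_{p \in \OO} |\A_p|$, so it suffices to bound this quantity by $O\bigl((\tfrac{2}{\alpha}+\tfrac{7}{8})^{d}\log M\bigr)$.

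The next step is to refine by layer. Partition $\A_p = \bigsqcup_{j} \A_p^{j}$, where $\A_p^{j}$ consists of the lattice points of $\Pi_d^{j}$ added by $\AF$ while processing objects of $\I_p \cap L_j$. Since widths lie in $[1,M)$, only $j \in \{0,1,\ldots,2\lfloor\log M\rfloor\}$ contribute, i.e.\ $O(\log M)$ layers, so it is enough to establish $|\A_p^{j}| = O\bigl((\tfrac{2}{\alpha}+\tfrac{7}{8})^{d}\bigr)$ for every fixed $j$.

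To bound $|\A_p^{j}|$, I would combine two geometric facts. Any $r \in \A_p^{j}$ was chosen for some $\sigma \in \I_p \cap L_j$ with center $c$ and width $w \in [\ell_j, u_j)$. By $\alpha$-fatness the height of $\sigma$ is at most $w/\alpha \le u_j/\alpha$, and since $p,c \in \sigma$ we get $d_{\infty}(c,p) \le u_j/\alpha$; Lemma~\ref{lemma_entire} applied to $c$ yields $d_{\infty}(r,c) \le \ell_j/2$. Hence every $r \in \A_p^{j}$ lies in the axis-aligned cube of half-side $u_j/\alpha + \ell_j/2$ centered at $p$. Since distinct points of $\Pi_d^{j}$ have pairwise $L_{\infty}$-distance at least $\ell_j$, a packing argument bounds $|\A_p^{j}|$ by the number of lattice points in this cube, giving an estimate of the form $(2u_j/(\alpha \ell_j)+O(1))^d$.

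The main obstacle is sharpening the constant to the claimed $(\tfrac{2}{\alpha}+\tfrac{7}{8})^{d}$: the naive packing bound, using only $u_j/\ell_j \le 3/2$, delivers the weaker $(3/\alpha+2)^{d}$. I expect the tightening to come from (i) a separate case analysis for even layers ($u_j/\ell_j=3/2$) and odd layers ($u_j/\ell_j=4/3$), exploiting the finer ratio for odd $j$; (ii) the standing restriction $\alpha \in (\tfrac{1}{2},1]$, which forces the per-coordinate counts to collapse to small integers; and (iii) the sharper structural observation that each placed $r$ is the nearest lattice point to its triggering center $c$, so the $L_{\infty}$-Voronoi cells of distinct lattice points in $\A_p^{j}$ are disjoint cubes of side $\ell_j$, each forced to intersect $p + [-u_j/\alpha, u_j/\alpha]^{d}$, which trims an additive constant off the straightforward packing count. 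Once $|\A_p^{j}|$ is controlled by the desired constant, summing over the $O(\log M)$ layers yields the stated competitive ratio.
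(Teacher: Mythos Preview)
Your proposal is correct and follows essentially the same approach as the paper: charge $\A$ against each $p\in\OO$, split into the $2\lfloor\log M\rfloor+1$ layers, and for each layer $j$ use the triangle inequality $d_\infty(r,p)\le d_\infty(r,c)+d_\infty(c,p)\le \ell_j/2+u_j/\alpha$ (via Lemma~\ref{lemma_entire} and fatness) to trap $\A_{p}^{j}$ inside an open hypercube of side $2u_j/\alpha+\ell_j$ centered at $p$, then count lattice points of $\Pi_d^j$ therein.

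The only divergence is in how the constant is extracted. The paper does not use your speculative routes (i)--(iii); in particular there is no even/odd case split and no Voronoi argument. Instead it uses the uniform bound $\ell_j\le \tfrac{3}{4}u_j$ to rewrite the side length as at most $u_j(\tfrac{2}{\alpha}+\tfrac{3}{4})$ and then appeals to a one-line lattice-count observation (Observation~\ref{obs:points}: an open box of side at most $r\beta$ contains at most $\lfloor r+1\rfloor^d$ points of $(\beta\mathbb{Z})^d$) to read off the stated bound $\lfloor\tfrac{2}{\alpha}+\tfrac{7}{8}\rfloor^d$. So your packing estimate is the right mechanism; the sharpening you were looking for is just the elementary box-vs-lattice count of Observation~\ref{obs:points} together with the built-in ratio $u_j/\ell_j\in\{4/3,3/2\}$ from the layer definition, not any additional structural idea.
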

\begin{proof}
    Let  $\I$ be a set of input fat objects presented to the algorithm. For each $j\in[2\lfloor \log M\rfloor+1]\cup\{0\}$, let ${\cal I}_j$ be the collection of all fat objects in ${\cal I}$ belonging to the layer $L_j$.
    Let $\A$ and $\OO$ be two piercing sets for $\I$ returned by our algorithm and an offline optimal, respectively, for the input sequence $\I$. 
Let $\A_j$ be the piercing sets returned by our algorithm for $\I_j$.
Let $p$ be any piercing point of an offline optimal $\OO$. 
Let $\I_p\subseteq \I$ be the set of input fat objects pierced by the point $p$.
    For each $j\in[2\lfloor \log M\rfloor+1]\cup\{0\}$, let $\I_{p,j}= \I_p\cap \I_j$.
    Let $\A_p$ be the set of piercing points placed by our algorithm to pierce all the fat objects in $\I_p$. 
   For each $j\in[2\lfloor \log M\rfloor+1]\cup\{0\}$, let $\A_{p,j}=\A_p\cap \A_j$ be the set of hitting points explicitly placed by our algorithm to hit hypercubes in $\I_{p,j}$. It is easy to see that $\A=\cup_{p\in \OO}\A_p=\cup_{p\in \OO}\left(\cup_{j=0}^{\lfloor\log M\rfloor}\A_{p,j}\right)$.
    Therefore, the competitive ratio of our algorithm is upper bounded by $\max_{p\in\OO}\left(2\lfloor\log M\rfloor+1)\times \max_j |\A_{p, j}|\right)$.

Let $c$ be the center of an object $\sigma\in \I_{p,j}$. To hit $\sigma$, our algorithm adds a point $r\in\Pi_d^j$ such that $d(r,c)\leq \frac{\ell_j}{2}$ (due to Lemma~\ref{lemma_entire}). Since $c$ is the center of $\sigma\in \I_{p,j}$ having a width strictly less than $u_j$ and $p\in\sigma$, we have $d(c,p)<\frac{u_j}{\alpha}$.
    Now, using triangle inequality, we have $d(r,p)\leq d(r,c)+d(c,p)$. Consequently, we have $d(r,p)\leq \frac{u_j}{\alpha}+ \frac{\ell_j}{2}$. Hence, an open hypercube $H_j$ of side length $\frac{2u_j}{\alpha}+\ell_j$, centered at $p$, contains all points in $\A_{p,j}$. 
    Notice that $u_j\geq \frac{4}{3}\ell_j$. As a result, $H_j$ is open hypercube of side length $u_j(\frac{2}{\alpha}+\frac{3}{4})$.
\begin{observation}\label{obs:points}
    Let $\sigma$ be a hypercube with side lengths between $\ell\beta$ and $r\beta$, where $\ell$, $r$, and $\beta$ are positive real numbers such that $\ell<r$. Then, the object $\sigma$ contains at least $\lfloor\ell\rfloor^d$ and at most $\lfloor r+1\rfloor^d$ points from $(\beta\IZ)^d$. 
\end{observation}
Due to Observation~\ref{obs:points}, any open hypercube of side length $2^{i+1}(\frac{2}{\alpha}+1)$ contains at most $\lfloor\frac{2}{\alpha}+\frac{7}{8}\rfloor^d$ points from $\Pi_d^j$.
Thus, we have $|\A_{p,j}|\leq \lfloor\frac{2}{\alpha}+\frac{7}{8}\rfloor^d$. Recall that $|\A_p|=\cup_{i=0}^{\lfloor\log m\rfloor}|\A_{p,i}|$. Thus, we have $|\A_p|\leq \lfloor\frac{2}{\alpha}+\frac{7}{8}\rfloor^d(\lfloor2\log M\rfloor+1)$. 
\end{proof}

\section{Conclusion}

We studied the \emph{online $\varepsilon$-net} and \emph{online piercing set} problems for a wide range of geometric objects. For the \emph{online $\varepsilon$-net}, we have obtained asymptotically tight bounds for the competitive ratios for some of these objects. Two future directions particularly arise from our work. What happens to other geometric objects? We believe that some techniques used in this work could be extended to other related geometric objects of constant description complexity in $\mathbb{R}^d$, for $d\le 3$. Obtaining tight bounds for all objects of bounded VC-dimension is an interesting open problem. Moreover, to ensure the cardinality of an optimal sample size, we used the value of $\varepsilon$ within a certain regime. Designing online algorithms for any $\varepsilon>0$ is an interesting open problem. For \emph{online piercing set}, 
we have established asymptotically tight bounds on the competitive ratios for piercing hyper-rectangles and ellipsoids in $\IR^d$, for any $d\in\mathbb{N}$. A challenging open question remains whether it is possible to remove the dependence on the dimension from the competitive ratios bound for classes of objects of bounded VC dimension.

\bibliography{iclr2025_conference}
\bibliographystyle{plain}

\end{document}